\documentclass[journal]{IEEEtran}

\usepackage{amsmath,amssymb,graphicx}
\usepackage{cite}

\newtheorem{theorem}{Theorem}
\newtheorem{corollary}{Corollary}
\newtheorem{assumption}{Assumption}
\newtheorem{remark}{Remark}
\newenvironment{proof}[1][Proof]{\par\noindent\textit{#1.}\ }{\hfill$\square$\par}

\newcommand{\thetaml}{\widehat{\theta}_{\mathrm{ML}}}
\newcommand{\thetaeb}{\widehat{\theta}_{\mathrm{EB}}}
\newcommand{\thetamix}{\widehat{\theta}_{\mathrm{mix}}}
\newcommand{\xmse}{\mathrm{XMSE}}
\newcommand{\E}{\mathbb{E}}
\newcommand{\Tr}{\mathrm{Tr}}

\title{XMSE-Aware Adaptive Empirical Bayes Estimation}
\author{Minghao~Chen and Jiale~Zheng%
\thanks{M. Chen is with Tencent Technology, Shenzhen, China
(e-mail: monychen@tencent.com).}%
\thanks{J. Zheng is with Huawei Noah's Ark Lab, Shenzhen, China
(e-mail: zhengjiale2@huawei.com).}}

\begin{document}
\maketitle

\begin{abstract}
Empirical Bayes (EB) estimators can match the first-order asymptotic risk of
maximum likelihood (ML) while behaving very differently at second order:
recent excess mean squared error (XMSE) analysis shows that kernel-based EB
estimation may be worse than ML when the kernel is poorly aligned with the
true parameter.  This paper turns that diagnostic into a design principle.
We propose an XMSE-aware mixed estimator that interpolates between ML and EB
shrinkage.  Its fixed-weight XMSE is a scalar quadratic, yielding a
closed-form oracle mixing weight that is no worse than both ML and the base
EB estimator at the XMSE scale.  A plug-in implementation based on
finite-sample XMSE approximations is proved consistent, with a second-order
oracle regret rate for an interior oracle weight.  We further establish a
transfer of the regret bound to the fixed-weight risk curve evaluated at the
selected weight, a thresholded boundary rule, and extensions to compact kernel
families and to finite and growing kernel dictionaries with high-probability
oracle bounds.
Finite impulse response simulations with SURE-tuned, hard-selection, and
trace-corrected baselines, together with the public Silverbox and Cascaded
Tanks benchmarks, show that the proposed estimator retains most of the
benefit of regularization when it is helpful and retreats toward ML under
kernel misspecification, with an identified finite-sample calibration
failure mode analyzed on the benchmarks.
\end{abstract}

\begin{IEEEkeywords}
Empirical Bayes, excess mean squared error, regularized system
identification, hyperparameter estimation, kernel selection.
\end{IEEEkeywords}

\section{Introduction}

Empirical Bayes (EB) and kernel-based regularized estimators are widely used in
linear regression and system identification \cite{efron1972,maritz2018,
pillonetto2022}.  Their first-order asymptotic MSE often coincides with that
of the maximum likelihood (ML) estimator, so first-order theory cannot explain
when regularization helps or hurts.  The XMSE framework addresses this
limitation by studying the second-order difference
\[
  \xmse(\widehat{\theta})
  =
  \lim_{N\to\infty} N^2
  \left[
    \mathrm{MSE}(\widehat{\theta})
    -
    \mathrm{MSE}(\thetaml)
  \right].
\]
Existing XMSE analysis is primarily diagnostic: it decomposes the EB risk gap
and identifies kernel-parameter misalignment as a source of positive XMSE.
This paper turns the diagnostic into a design principle.

The main contributions are as follows.
\begin{enumerate}
  \item We introduce an XMSE-aware mixed estimator that interpolates between
  ML and a base EB estimator.
  \item We derive the fixed-weight XMSE formula and show that its oracle
  mixing weight has a closed form, including explicit improvement regimes that
  explain when the oracle rule selects ML, a strict mixture, or EB.
  \item We prove that the oracle mixed estimator is no worse than both ML and
  the base EB estimator at the XMSE scale.
  \item We develop a plug-in implementation based on finite-sample XMSE
  component estimates and establish consistency, a boundary-robust plug-in
  oracle inequality, and a second-order oracle regret bound for the selected
  weight at an interior oracle optimum.  We further prove that the
  inherited expansion holds uniformly in the mixing weight, so the regret
  bound transfers from the XMSE criterion to the deterministic fixed-weight
  risk curve evaluated at the selected weight; the corresponding expansion for
  the randomly weighted estimator itself is posed as an open problem.  We also
  give a thresholded zero-bias boundary rule.
  For the scaled-kernel case, we provide primitive sufficient conditions for
  consistency of the plug-in XMSE components, strengthen these conditions to
  compact scaled-kernel families by uniform convergence, and extend the same
  logic to compact multi-parameter kernel families under uniform criterion
  convergence.
  \item We show that the same plug-in criterion yields a finite-set oracle
  inequality and, under a positive oracle gap, a consistent selector over a
  finite candidate set of kernels.  The same argument extends to growing
  candidate dictionaries under a uniform plug-in approximation condition, with
  a high-probability oracle bound and a sub-Gaussian rate specialization.
  \item We demonstrate, in FIR system-identification simulations that include
  SURE-tuned, hard-selection, trace-corrected, and marginal-likelihood
  baselines, that the proposed estimator protects against kernel mismatch
  while retaining most of the benefit of EB regularization under
  better-aligned choices.  On the public Silverbox and Cascaded Tanks
  benchmarks we report both the intended retreat behavior and an identified
  finite-sample calibration failure mode of the plug-in components.
\end{enumerate}

\section{Related Work}

The empirical Bayes view of shrinkage estimation goes back to Robbins
\cite{robbins1956} and to the classical James--Stein and empirical Bayes
literature on risk reduction \cite{james1961,efron1972,efron1973}.  Modern
parametric empirical Bayes methods use data-dependent hyperparameters to tune
prior or penalty structure \cite{morris1983,petrone2014}.  SURE-type risk
estimation traces to Stein's unbiased risk estimate \cite{stein1981}, while
GCV and kernel smoothing have a long connection to spline and Gaussian-process
methods \cite{wahba1990,rasmussen2006}.  In system identification, regularized
FIR estimation builds on the classical prediction-error framework
\cite{ljung1999}, and kernel-based regularization has become a standard way to
encode smoothness and decay of impulse responses
\cite{pillonetto2010,chen2012,pillonetto2014,chen2014ssm,chen2014sparse,
chen2015obf,pillonetto2016atomic,carli2017maxent,chen2018kernel,chenmh2020,
chenmh2022,chiuso2016,
pillonetto2022}.  Hyperparameters are often selected by marginal likelihood,
SURE, or GCV, with scalable implementation and robustness issues studied in
\cite{chen2013implementation,pillonetto2015tuning}; their asymptotic properties
have been studied in
\cite{mu2018,mu2018gcv,ju2021gcv,ju2022sure,mu2024cv}, and a unified family of
hyperparameter estimators linking EB and SURE has been proposed in
\cite{zhang2024family}.

The present paper is closest to the recent XMSE analysis of empirical Bayes
estimators \cite{ju2026xmse}, which shows that first-order asymptotic risk is
insufficient for distinguishing EB and ML estimators and that kernel
misalignment can make EB worse than ML.  A related line gives deterministic
conditions under which kernel-based regularization cannot improve on the
least-squares (ML) estimate \cite{mu2024when}; rather than certifying such cases
in advance, our rule detects harmful regularization from data at the
second-order risk scale and retreats toward ML.  Our contribution is
complementary:
instead of using XMSE only as an explanatory quantity, we use the same
second-order risk scale to design a mixed estimator, derive its oracle rule, and
analyze the plug-in implementation.  The fixed-weight quadratic criterion,
closed-form projected oracle weight, plug-in oracle inequality, XMSE-transfer
result, thresholded boundary rule, and finite/growing dictionary selectors are
therefore decision rules and guarantees built on top of the reference
decomposition, not direct restatements of the diagnostic XMSE formula.
Another recent XMSE-based direction constructs generalized Bayes and
closed-form biased estimators that match the XMSE of an EB regularized
estimator while avoiding hyperparameter estimation \cite{ju2025biased}.  That
line replaces the EB implementation by computationally simpler estimators with
comparable second-order behavior.  In contrast, the present paper keeps the
base EB estimator available and uses XMSE to decide how much to mix it with ML,
including plug-in oracle-regret, XMSE-transfer, boundary, and kernel-selection
guarantees.

\section{Problem Setup}

Consider the finite impulse response model
\[
  Y = \Phi \theta_0 + E,
  \qquad
  E \sim \mathcal{N}(0,\sigma^2 I_N),
\]
where $\Phi\in\mathbb{R}^{N\times n}$ is deterministic with full column rank
and
\[
  \frac{1}{N}\Phi^\top\Phi \to \Sigma \succ 0.
\]
Let $\thetaml = (\Phi^\top\Phi)^{-1}\Phi^\top Y$ be the ML estimator.
Throughout the paper, the risk of an estimator $\widehat{\theta}$ is the
parameter-space mean squared error
\[
  \mathrm{MSE}(\widehat{\theta})
  =
  \E\,\|\widehat{\theta}-\theta_0\|_2^2,
\]
where the expectation is over the noise $E$, and
$\Pi_{[0,1]}(x)=\min\{1,\max\{0,x\}\}$ denotes the projection onto $[0,1]$.
In this Gaussian model the ML estimator is exactly unbiased,
$\E[\thetaml]=\theta_0$, with covariance $\sigma^2(\Phi^\top\Phi)^{-1}$.

Let
\[
  \thetaeb = \widehat{\theta}_{\mathrm{EB}}(\widehat{\eta}(\thetaml))
\]
be an EB estimator whose hyperparameter estimate $\widehat{\eta}$ is a
function of $\thetaml$, as in the reference XMSE analysis \cite{ju2026xmse}.
Assumption~\ref{ass:inherited} below states, in self-contained moment form,
the minimal part of the reference XMSE expansion
\[
  \xmse(\thetaeb)
  =
  B_{\mathrm{EB}} + V_{\mathrm{EB}} + H_{\mathrm{EB}}
\]
needed for the mixed estimator, where
$B_{\mathrm{EB}}$, $V_{\mathrm{EB}}$, and $H_{\mathrm{EB}}$ are,
respectively, the second-order squared-bias, variance, and
hyperparameter-estimation components.

\section{XMSE-Aware Mixed Estimator}

For a mixing weight $\omega\in[0,1]$, define
\[
  \thetamix(\omega)
  =
  \thetaml + \omega(\thetaeb-\thetaml).
\]
The endpoints recover ML and EB:
\[
  \thetamix(0)=\thetaml,\qquad
  \thetamix(1)=\thetaeb.
\]

\section{Main Results}

The results below are organized by the strength of the assumption needed.  The
fixed-weight and oracle results use only the inherited XMSE expansion for the
base EB estimator.  The plug-in weight results add consistency of the estimated
XMSE components.  The compact scaled-kernel corollary gives a uniform
continuous-mapping route for practical hyperparameter searches, while the
compact multi-parameter result replaces closed-form scale consistency with
uniform criterion and component convergence over a compact kernel-parameter
set.  The XMSE-transfer theorem evaluates the deterministic fixed-weight risk
curve at the selected weight; its key step, uniform convergence of that curve
over the weight interval, is proved from the inherited expansion rather than
assumed.  The kernel-selection results replace scalar component
consistency by uniform approximation of the minimized criterion over the candidate
dictionary.  This separation is important: the paper does not claim a
general nonsmooth random-weight XMSE expansion---the risk of the estimator
with the data-dependent weight inside the expectation is explicitly left
open---and the boundary and dictionary results are stated under the weaker
conditions needed for their respective conclusions.

For orientation, the theory proceeds in six layers.  The fixed-weight result
uses only the inherited EB XMSE expansion and gives the exact quadratic
criterion for $\thetamix(\omega)$.  The oracle result minimizes this
deterministic criterion over $[0,1]$ and gives dominance over both endpoints.
The plug-in results add consistent component estimates and give consistency,
an oracle inequality, and an interior second-order regret rate.  The compact
scaled-kernel corollary supplies a uniform route for continuous searches over
scaled kernel families, and the compact multi-parameter result supplies a
uniform argmin route for more general continuous kernel families.  The
XMSE-transfer result shows that the fixed-weight risk curve converges to the
quadratic criterion uniformly in the weight, so the criterion regret bound
carries over to that curve at the selected weight.  Finally, the
kernel-dictionary results replace scalar component consistency by uniform
plug-in approximation over the candidate set.

\begin{assumption}[Inherited XMSE expansion]
\label{ass:inherited}
Let $d_N=\thetaeb-\thetaml$ and split it along the hyperparameter map, as in
the reference decomposition \cite{ju2026xmse}, into
\[
  d_N = d_{1,N}+d_{2,N},
\]
with the fixed-hyperparameter part and the hyperparameter-estimation part
\[
\begin{aligned}
  d_{1,N}&=\widehat{\theta}_{\mathrm{EB}}(\widehat{\eta}(\theta_0))-\thetaml,\\
  d_{2,N}&=\widehat{\theta}_{\mathrm{EB}}(\widehat{\eta}(\thetaml))
    -\widehat{\theta}_{\mathrm{EB}}(\widehat{\eta}(\theta_0)).
\end{aligned}
\]
All moments below exist for every $N$, and the following limits hold:
\begin{enumerate}
  \item[(i)] (bias) $N\,\E[d_N]\to b_{\mathrm{EB}}$, and
  $B_{\mathrm{EB}}=\|b_{\mathrm{EB}}\|_2^2$;
  \item[(ii)] (variance component)
  $2N^2\,\E\{(\thetaml-\theta_0)^\top d_{1,N}\}\to V_{\mathrm{EB}}$;
  \item[(iii)] (hyperparameter-estimation component)
  $2N^2\,\E\{(\thetaml-\theta_0)^\top d_{2,N}\}\to H_{\mathrm{EB}}$;
  \item[(iv)] (higher-order term)
  $N^2\,\E\,\|d_N-\E[d_N]\|_2^2\to 0$.
\end{enumerate}
\end{assumption}

\begin{remark}[Relation to the reference decomposition]
\label{rem:assumption-source}
Conditions (i)--(iv) are a self-contained moment restatement of the
limits established in \cite{ju2026xmse} for kernel-based EB estimators with
hyperparameter estimators that are functions of $\thetaml$:
(i) is the second-order bias limit, (ii) and (iii) are the limits of the
variance and hyperparameter-estimation cross terms in the reference
decomposition, and (iv) is the vanishing $N^2$ limit of the centered
higher-order term.  Since $\E[\thetaml]=\theta_0$ and
$\E\|d_N\|_2^2=\|\E[d_N]\|_2^2+\E\|d_N-\E[d_N]\|_2^2$, conditions (i)--(iv)
immediately reproduce the reference expansion
$\xmse(\thetaeb)=B_{\mathrm{EB}}+V_{\mathrm{EB}}+H_{\mathrm{EB}}$.  For
orientation on the orders involved: in the scaled-kernel case
$d_{1,N}=O_p(N^{-1})$ with mean of exact order $N^{-1}$, while its centered
part and $d_{2,N}$'s centered part are $O_p(N^{-3/2})$, so (iv) reflects the
fact that the EB correction concentrates around its mean faster than the
$N^{-1}$ bias scale.
\end{remark}

\begin{theorem}[Fixed-weight XMSE]
\label{thm:fixed-weight}
Under Assumption~\ref{ass:inherited}, for any fixed $\omega\in[0,1]$,
\[
  \xmse(\thetamix(\omega))
  =
  \omega^2 B_{\mathrm{EB}}
  +
  \omega V_{\mathrm{EB}}
  +
  \omega H_{\mathrm{EB}}.
\]
\end{theorem}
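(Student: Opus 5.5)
We expand the squared error of the mixed estimator directly and pass to the limit term by term using Assumption~\ref{ass:inherited}. Since $\thetamix(\omega)-\theta_0=(\thetaml-\theta_0)+\omega d_N$, we have the exact identity
\[
  \mathrm{MSE}(\thetamix(\omega))-\mathrm{MSE}(\thetaml)
  =
  2\omega\,\E\{(\thetaml-\theta_0)^\top d_N\}
  +\omega^2\,\E\|d_N\|_2^2 .
\]
All moments exist by the assumption, so this holds for every $N$. Multiplying by $N^2$, the XMSE is the limit of the right-hand side, provided each piece converges.

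\emph{Cross term.} Write $d_N=d_{1,N}+d_{2,N}$. Then
\[
  2N^2\E\{(\thetaml-\theta_0)^\top d_N\}
  =2N^2\E\{(\thetaml-\theta_0)^\top d_{1,N}\}
  +2N^2\E\{(\thetaml-\theta_0)^\top d_{2,N}\}.
\]
By (ii) and (iii), this tends to $V_{\mathrm{EB}}+H_{\mathrm{EB}}$. Multiplying by the fixed scalar $\omega$ gives $\omega V_{\mathrm{EB}}+\omega H_{\mathrm{EB}}$.

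\emph{Quadratic term.} Use the bias--variance split already noted in Remark~\ref{rem:assumption-source}:
\[
  N^2\E\|d_N\|_2^2=\|N\E[d_N]\|_2^2+N^2\E\|d_N-\E[d_N]\|_2^2 .
\]
By (i) and continuity of the norm, the first term tends to $\|b_{\mathrm{EB}}\|_2^2=B_{\mathrm{EB}}$. By (iv), the second term tends to $0$. Hence $\omega^2 N^2\E\|d_N\|_2^2\to\omega^2B_{\mathrm{EB}}$.

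Adding the two limits gives the stated formula. The endpoints check out: $\omega=0$ gives $0$, and $\omega=1$ recovers $B_{\mathrm{EB}}+V_{\mathrm{EB}}+H_{\mathrm{EB}}$.

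There is no real obstacle, since $\omega$ is deterministic and can be pulled out of every expectation. The only point to handle carefully is that the XMSE limit is taken of the difference of MSEs, so we never need the limit of $N^2\,\mathrm{MSE}(\thetaml)$ itself. The decomposition above is exact before taking limits, so this causes no difficulty. Nothing about Gaussianity is needed beyond the finiteness of moments.
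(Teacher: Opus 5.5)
Your proposal is correct and follows the paper's proof essentially step for step. It uses the same exact expansion of the MSE difference with $\omega$ pulled out, sends the cross term to $\omega(V_{\mathrm{EB}}+H_{\mathrm{EB}})$ via (ii) and (iii), and splits $\E\|d_N\|_2^2$ into $\|\E[d_N]\|_2^2$ plus the centered part, handled by (i) and (iv).
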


\begin{proof}
Since $\thetamix(\omega)-\thetaml=\omega d_N$ with deterministic $\omega$,
expanding the square gives the exact identity
\[
\begin{aligned}
&\mathrm{MSE}(\thetamix(\omega))-\mathrm{MSE}(\thetaml)\\
&\quad =
2\omega\,\E\{(\thetaml-\theta_0)^\top d_N\}
+\omega^2\,\E\,\|d_N\|_2^2.
\end{aligned}
\]
Multiply by $N^2$ and take limits term by term.  By (ii) and (iii) of
Assumption~\ref{ass:inherited}, the cross term converges to
$\omega(V_{\mathrm{EB}}+H_{\mathrm{EB}})$.  For the quadratic term, write
$\E\|d_N\|_2^2=\|\E[d_N]\|_2^2+\E\|d_N-\E[d_N]\|_2^2$; by (i) the first part
contributes $\omega^2 B_{\mathrm{EB}}$, and by (iv) the second part vanishes
in the $N^2$ limit.
\end{proof}

\begin{theorem}[Oracle XMSE weight]
\label{thm:oracle-weight}
Let
\[
  C_{\mathrm{EB}} = V_{\mathrm{EB}}+H_{\mathrm{EB}}.
\]
An XMSE-minimizing fixed mixing weight over $[0,1]$ is
\[
  \omega^\star =
  \begin{cases}
  \Pi_{[0,1]}\left(-\dfrac{C_{\mathrm{EB}}}{2B_{\mathrm{EB}}}\right),
    & B_{\mathrm{EB}}>0,\\[1.2em]
  1, & B_{\mathrm{EB}}=0,\ C_{\mathrm{EB}}<0,\\
  0, & B_{\mathrm{EB}}=0,\ C_{\mathrm{EB}}\ge 0.
  \end{cases}
\]
Moreover,
\[
  \xmse(\thetamix(\omega^\star))
  \le
  \min\{0,\xmse(\thetaeb)\}.
\]
\end{theorem}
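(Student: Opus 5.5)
By Theorem~\ref{thm:fixed-weight}, the map $\omega\mapsto\xmse(\thetamix(\omega))$ is exactly the deterministic scalar function
\[
  q(\omega)=\omega^2 B_{\mathrm{EB}}+\omega C_{\mathrm{EB}},\qquad \omega\in[0,1],
\]
with $B_{\mathrm{EB}}=\|b_{\mathrm{EB}}\|_2^2\ge 0$. The plan is to minimize $q$ over $[0,1]$ by elementary calculus, splitting on whether $B_{\mathrm{EB}}$ vanishes. The dominance claim will then follow by comparing the minimum with the endpoints.

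\emph{Case $B_{\mathrm{EB}}>0$.} Here $q$ is a strictly convex parabola with unconstrained minimizer $\bar\omega=-C_{\mathrm{EB}}/(2B_{\mathrm{EB}})$. For a convex function on an interval, the constrained minimizer is the projection of the unconstrained one. Concretely, $q'(\omega)=2B_{\mathrm{EB}}(\omega-\bar\omega)$, so $q$ is decreasing left of $\bar\omega$ and increasing right of it.
\begin{itemize}
  \item If $\bar\omega\in[0,1]$, the minimizer is $\bar\omega$.
  \item If $\bar\omega<0$, then $q$ is increasing on $[0,1]$ and the minimizer is $0$.
  \item If $\bar\omega>1$, then $q$ is decreasing on $[0,1]$ and the minimizer is $1$.
\end{itemize}
In all three sub-cases the minimizer is $\Pi_{[0,1]}(\bar\omega)$.

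\emph{Case $B_{\mathrm{EB}}=0$.} Now $q(\omega)=\omega C_{\mathrm{EB}}$ is linear. If $C_{\mathrm{EB}}<0$ it is minimized at $\omega=1$. If $C_{\mathrm{EB}}\ge 0$, $\omega=0$ is a minimizer; when $C_{\mathrm{EB}}=0$ every weight is optimal, so this choice is still valid, which is why the statement says ``an'' XMSE-minimizing weight.

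\emph{Dominance.} Since $\omega^\star$ minimizes $q$ over $[0,1]$ and both $0$ and $1$ are feasible,
\[
  q(\omega^\star)\le\min\{q(0),q(1)\}.
\]
The endpoint values are identified as follows. First, $q(0)=0$, which matches $\xmse(\thetaml)=0$ by definition. Second, $q(1)=B_{\mathrm{EB}}+V_{\mathrm{EB}}+H_{\mathrm{EB}}$, which is $\xmse(\thetaeb)$ by Theorem~\ref{thm:fixed-weight} at $\omega=1$ and is consistent with Remark~\ref{rem:assumption-source}. Together these give $\xmse(\thetamix(\omega^\star))\le\min\{0,\xmse(\thetaeb)\}$.

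There is no real obstacle here. The only points needing care are the degenerate case $B_{\mathrm{EB}}=0$, where the objective is linear rather than strictly convex, and the tie $C_{\mathrm{EB}}=0$, which explains the non-uniqueness. One should also note that $\omega^\star$ is a deterministic constant. Theorem~\ref{thm:fixed-weight} applies to any fixed weight, so plugging in $\omega=\omega^\star$ is legitimate.
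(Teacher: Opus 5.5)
Your proposal is correct and follows the paper's proof essentially step for step. You use Theorem~\ref{thm:fixed-weight} to reduce to $q(\omega)=B_{\mathrm{EB}}\omega^2+C_{\mathrm{EB}}\omega$, project the unconstrained minimizer when $B_{\mathrm{EB}}>0$, use the sign of $C_{\mathrm{EB}}$ in the linear case, and compare with the feasible endpoints $q(0)=0$ and $q(1)=\xmse(\thetaeb)$; your extra remarks on $B_{\mathrm{EB}}\ge 0$, the tie at $C_{\mathrm{EB}}=0$, and the determinism of $\omega^\star$ are correct refinements that the paper leaves implicit.
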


\begin{proof}
By Theorem~\ref{thm:fixed-weight}, the oracle problem is the scalar convex
minimization
\[
  \min_{\omega\in[0,1]}
  q(\omega),
  \qquad
  q(\omega)=B_{\mathrm{EB}}\omega^2+C_{\mathrm{EB}}\omega.
\]
If $B_{\mathrm{EB}}>0$, the unconstrained minimizer is
$-C_{\mathrm{EB}}/(2B_{\mathrm{EB}})$ and projection gives the constrained
minimizer.  If $B_{\mathrm{EB}}=0$, then $q$ is linear; the minimizer is
$1$ when $C_{\mathrm{EB}}<0$ and $0$ otherwise.  Since the feasible set
contains $\omega=0$ and $\omega=1$,
\[
  q(\omega^\star)\le q(0)=0,\qquad
  q(\omega^\star)\le q(1)=\xmse(\thetaeb),
\]
which proves the dominance claim.
\end{proof}

\begin{corollary}[Oracle improvement regimes]
\label{cor:oracle-regimes}
Assume $B_{\mathrm{EB}}>0$ and write
$C_{\mathrm{EB}}=V_{\mathrm{EB}}+H_{\mathrm{EB}}$.  The minimized oracle XMSE
criterion is
\[
  q(\omega^\star)=
  \begin{cases}
  0, & C_{\mathrm{EB}}\ge 0,\\[0.4em]
  -\dfrac{C_{\mathrm{EB}}^2}{4B_{\mathrm{EB}}},
    & -2B_{\mathrm{EB}}\le C_{\mathrm{EB}}<0,\\[1em]
  B_{\mathrm{EB}}+C_{\mathrm{EB}}, & C_{\mathrm{EB}}<-2B_{\mathrm{EB}}.
  \end{cases}
\]
Consequently the oracle rule returns ML when the linear XMSE component is
nonnegative, uses a strict mixture when the component is negative but not large
enough to offset the full EB bias penalty, and returns EB when the linear
benefit dominates that penalty.
\end{corollary}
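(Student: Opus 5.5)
The plan is to evaluate the quadratic $q(\omega)=B_{\mathrm{EB}}\omega^2+C_{\mathrm{EB}}\omega$ from the proof of Theorem~\ref{thm:oracle-weight} at the explicit minimizer $\omega^\star$ given there. Since $B_{\mathrm{EB}}>0$, only the first branch of the formula for $\omega^\star$ is relevant: $\omega^\star=\Pi_{[0,1]}(u)$ with $u=-C_{\mathrm{EB}}/(2B_{\mathrm{EB}})$. Theorem~\ref{thm:fixed-weight} identifies $q(\omega)$ with $\xmse(\thetamix(\omega))$, so the minimized criterion is $q(\omega^\star)$. The proof then reduces to a three-way case split according to where $u$ sits relative to $[0,1]$.

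First, the case $C_{\mathrm{EB}}\ge 0$. Then $u\le 0$, so the projection gives $\omega^\star=0$ and $q(0)=0$. Second, the case $-2B_{\mathrm{EB}}\le C_{\mathrm{EB}}<0$. This is equivalent, after dividing by $-2B_{\mathrm{EB}}<0$ and flipping the inequalities, to $0<u\le 1$. Here $\omega^\star=u$, and direct substitution gives
\[
q(u)=\frac{C_{\mathrm{EB}}^2}{4B_{\mathrm{EB}}}-\frac{C_{\mathrm{EB}}^2}{2B_{\mathrm{EB}}}=-\frac{C_{\mathrm{EB}}^2}{4B_{\mathrm{EB}}}.
\]
Third, the case $C_{\mathrm{EB}}<-2B_{\mathrm{EB}}$. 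This means $u>1$, so $\omega^\star=1$ and $q(1)=B_{\mathrm{EB}}+C_{\mathrm{EB}}$. The three cases partition the real line for $C_{\mathrm{EB}}$, so the displayed formula follows.

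For the interpretive ``consequently'' part, I will read off $\omega^\star$ in each case: $\omega^\star=0$ (ML), $\omega^\star\in(0,1]$ (a strict mixture), and $\omega^\star=1$ (EB). The middle case is a strict mixture except at the boundary value $C_{\mathrm{EB}}=-2B_{\mathrm{EB}}$, where $u=1$. At that value the first two formulas agree: $-C_{\mathrm{EB}}^2/(4B_{\mathrm{EB}})=-B_{\mathrm{EB}}=B_{\mathrm{EB}}+C_{\mathrm{EB}}$. I will note this to confirm that the piecewise expression is continuous and that the case boundaries are consistent. I will also point out that in the second case the value is strictly negative, which quantifies the strict improvement over ML. In the third case $B_{\mathrm{EB}}+C_{\mathrm{EB}}=\xmse(\thetaeb)<-B_{\mathrm{EB}}<0$.

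There is no real obstacle. The only care needed is translating the condition on $C_{\mathrm{EB}}$ into the position of $u$ with the correct inequality directions, and handling the endpoint $C_{\mathrm{EB}}=-2B_{\mathrm{EB}}$ so that the verbal ``strict mixture'' claim matches the half-open interval.
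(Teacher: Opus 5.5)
Your proposal is correct and matches the paper's proof: you project the unconstrained minimizer $-C_{\mathrm{EB}}/(2B_{\mathrm{EB}})$ onto $[0,1]$ in each of the three cases for $C_{\mathrm{EB}}$ and substitute into $q$. You also observe that at $C_{\mathrm{EB}}=-2B_{\mathrm{EB}}$ the oracle weight equals $1$, so the verbal ``strict mixture'' claim holds only for $-2B_{\mathrm{EB}}<C_{\mathrm{EB}}<0$; this is a correct refinement that the paper's proof leaves implicit.
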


\begin{proof}
For $B_{\mathrm{EB}}>0$, the unconstrained minimizer of
$q(\omega)=B_{\mathrm{EB}}\omega^2+C_{\mathrm{EB}}\omega$ is
$-C_{\mathrm{EB}}/(2B_{\mathrm{EB}})$.  If $C_{\mathrm{EB}}\ge0$, its
projection is $0$ and $q(\omega^\star)=0$.  If
$-2B_{\mathrm{EB}}\le C_{\mathrm{EB}}<0$, the unconstrained minimizer lies in
$[0,1]$, and substitution gives
$q(\omega^\star)=-C_{\mathrm{EB}}^2/(4B_{\mathrm{EB}})$.  If
$C_{\mathrm{EB}}<-2B_{\mathrm{EB}}$, the projection is $1$ and
$q(\omega^\star)=B_{\mathrm{EB}}+C_{\mathrm{EB}}$.
\end{proof}

\section{Scaled Kernel Special Case}

For implementation and experiments we consider the scaled-kernel family
\[
  P(\eta)=\eta K,\qquad K\succ0,\quad \eta>0,
\]
where $K$ is fixed.  The fixed kernels used in this paper are, for
$k,l=1,\ldots,n$ and a decay parameter $\gamma\in(0,1)$,
\[
\begin{aligned}
  K_{\mathrm{RI}}[k,l] &= 1\{k=l\},\\
  K_{\mathrm{DI}}[k,l] &= \gamma^{\,k-1}\,1\{k=l\},\\
  K_{\mathrm{TC}}[k,l] &= \gamma^{\max(k,l)},\\
  K_{\mathrm{SS}}[k,l] &= \frac{\gamma^{\,k+l+\max(k,l)}}{2}
    -\frac{\gamma^{\,3\max(k,l)}}{6},
\end{aligned}
\]
i.e., ridge (RI), diagonal-decay (DI), tuned/correlated (TC), and stable
spline (SS), matching the kernel definitions in
\cite{chen2012,pillonetto2010,ju2026xmse}.  In particular, $K_{\mathrm{SS}}$
divides the whole first term by two, not the exponent; at $n=20$ and
$\gamma=0.95$ its condition number is $7.55\times10^6$, which reproduces the
ill-conditioning reported for the SS kernel in \cite{ju2026xmse}.
Let $K^{-1}$ be denoted by $Q$ and define
\[
  S_1 = \Sigma^{-1},\qquad S_2=\Sigma^{-2}.
\]
For the scaled EB hyperparameter estimator, the limiting scale is
\[
  \eta_b^\star
  =
  \frac{\theta_0^\top Q\theta_0}{\alpha n},
\]
where $\alpha>0$ is a fixed normalization constant of the scale rule, set to
$\alpha=1$ in all experiments.  For the SURE/GCV-type hyperparameter
estimator, the limiting scale is
\[
  \eta_y^\star
  =
  \frac{\theta_0^\top Q S_1 Q\theta_0}
       {\alpha \Tr(S_1Q)}.
\]
For either choice of $\eta^\star$, the fixed-hyperparameter bias and variance
components of the regularized estimator are
\[
  B =
  \frac{\sigma^4}{(\eta^\star)^2}
  \theta_0^\top Q S_2 Q\theta_0,
\]
and
\[
  V =
  -\frac{2\sigma^4}{\eta^\star}
  \Tr(S_1QS_1).
\]
The hyperparameter-estimation component depends on the hyperparameter
estimator.  For scaled EB,
\[
  H_b =
  \frac{4\sigma^4}
       {\alpha n(\eta_b^\star)^2}
  \theta_0^\top Q S_2 Q\theta_0.
\]
For SURE/GCV-type scale estimation,
\[
  H_y =
  \frac{4\sigma^4}
       {\alpha\Tr(S_1Q)(\eta_y^\star)^2}
  \theta_0^\top Q S_1 Q S_2 Q\theta_0.
\]
The last quadratic form need not be nonnegative when $Q$ and $\Sigma$ do not
commute, which matches the phenomenon observed in the reference XMSE analysis.
In the notation of Assumption~\ref{ass:inherited}, these $B$, $V$, and $H$ are
the scaled-kernel instances of the second-order components $B_{\mathrm{EB}}$,
$V_{\mathrm{EB}}$, and $H_{\mathrm{EB}}$; the plug-in consistency statements
below ($\widehat{B}\xrightarrow{p}B_{\mathrm{EB}}$, etc.) are to be read under
this identification.

\begin{remark}[Implementation of the hyperparameter estimators]
\label{rem:one-step}
The experiments implement the scaled EB and SURE-type hyperparameter
estimates as the explicit one-step plug-in rules
\[
  \widehat{\eta}_b
  =
  \frac{\thetaml^\top Q\thetaml}{\alpha n},
  \qquad
  \widehat{\eta}_y
  =
  \frac{\thetaml^\top Q\widehat{\Sigma}^{-1}Q\thetaml}
       {\alpha\Tr(\widehat{\Sigma}^{-1}Q)},
\]
with $\widehat{\Sigma}=\Phi^\top\Phi/N$, i.e., the limiting scale maps
$\eta_b^\star$ and $\eta_y^\star$ evaluated at $\thetaml$.  These are
\emph{not} the finite-sample minimizers of the EB and SURE criteria of
\cite{ju2026xmse}; the two implementations differ at finite $N$.  They
nevertheless have the same XMSE: by the reference analysis, the XMSE of
$\widehat{\theta}_{\mathrm{EB}}(\widehat{\eta}(\thetaml))$ depends on the
hyperparameter map only through its probability limit $\eta^\star(\theta_0)$
and the derivative of the estimator map at $\theta_0$, and for the
scaled-kernel family the criterion minimizers and the one-step rules above
share both quantities (cf.\ Corollaries~2.3--2.6 of \cite{ju2026xmse}, where
the SURE and GCV minimizers are also shown to share the same limit and
XMSE).  Finite-sample GCV, by contrast, is implemented as an actual criterion
minimization, by golden-section search on $\log\eta$, so that the experiments
contain at least one criterion-minimizing implementation; its limiting scale
is again $\eta_y^\star$.  This asymmetry---closed-form one-step rules for
EB/SURE, numerical criterion minimization for GCV---is deliberate: it keeps
the EB/SURE rows exactly reproducible in closed form while letting the GCV
rows probe the finite-sample behavior of a minimized criterion.  The plug-in
XMSE components replace $(\theta_0,\Sigma,\eta^\star)$ by
$(\thetaml,\widehat{\Sigma},\widehat{\eta})$ in the displayed formulas.
\end{remark}

\begin{theorem}[Primitive plug-in consistency for scaled kernels]
\label{thm:primitive-consistency}
Fix $n$ and a positive definite kernel $K$ with $Q=K^{-1}$.  Assume
$\widehat{\Sigma}=\Phi^\top\Phi/N\to\Sigma\succ0$ and
$\thetaml\xrightarrow{p}\theta_0$.  Assume also that $\sigma^2$ is known, or is
replaced by an estimator $\widehat{\sigma}^2\xrightarrow{p}\sigma^2$.

For the scaled EB rule, suppose $\theta_0^\top Q\theta_0>0$ and define the
plug-in scale by
\[
  \widehat{\eta}_b
  =
  \frac{\thetaml^\top Q\thetaml}{\alpha n}.
\]
For the SURE/GCV-type rule, suppose
$\theta_0^\top Q\Sigma^{-1}Q\theta_0>0$ and define
\[
  \widehat{\eta}_y
  =
  \frac{\thetaml^\top Q\widehat{\Sigma}^{-1}Q\thetaml}
       {\alpha\Tr(\widehat{\Sigma}^{-1}Q)}.
\]
Then the plug-in quantities obtained from the scaled-kernel formulas satisfy
\[
  \widehat{B}\xrightarrow{p}B,\qquad
  \widehat{V}\xrightarrow{p}V,\qquad
  \widehat{H}\xrightarrow{p}H
\]
for the corresponding hyperparameter rule.  The same conclusion holds
uniformly over any finite candidate set of positive definite kernels whose
limiting scale denominators are nonzero.
\end{theorem}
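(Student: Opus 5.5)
The plan is a continuous-mapping argument. For each hyperparameter rule, write every plug-in component as a fixed function
\[
  g(\sigma^2,\theta,M,\eta)
\]
evaluated at $(\widehat{\sigma}^2,\thetaml,\widehat{\Sigma},\widehat{\eta})$, where $M$ stands for the design limit. Then check that $g$ is continuous at the population point $(\sigma^2,\theta_0,\Sigma,\eta^\star)$.

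\textbf{Plug-in components.} For the scaled EB rule,
\[
  \widehat{B}=\frac{\widehat{\sigma}^4}{\widehat{\eta}_b^2}\,\thetaml^\top Q\widehat{\Sigma}^{-2}Q\thetaml,\qquad
  \widehat{V}=-\frac{2\widehat{\sigma}^4}{\widehat{\eta}_b}\Tr(\widehat{\Sigma}^{-1}Q\widehat{\Sigma}^{-1}),
\]
\[
  \widehat{H}=\frac{4\widehat{\sigma}^4}{\alpha n\,\widehat{\eta}_b^2}\,\thetaml^\top Q\widehat{\Sigma}^{-2}Q\thetaml .
\]
The SURE/GCV components are obtained the same way, with $\widehat{\eta}_y$ in place of $\widehat{\eta}_b$ and the extra factor $\Tr(\widehat{\Sigma}^{-1}Q)$ in $\widehat{H}$. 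If $\sigma^2$ is known, $\widehat{\sigma}^2=\sigma^2$ is treated as a constant sequence.

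\textbf{Step 1: the design matrix.} The convergence $\widehat{\Sigma}\to\Sigma$ is deterministic, and $\Sigma\succ0$. Matrix inversion is continuous on the open set of invertible matrices, so $\widehat{\Sigma}$ is eventually invertible and
\[
  \widehat{\Sigma}^{-1}\to S_1,\qquad \widehat{\Sigma}^{-2}\to S_2 .
\]
The traces $\Tr(\widehat{\Sigma}^{-1}Q)$ and $\Tr(\widehat{\Sigma}^{-1}Q\widehat{\Sigma}^{-1})$ converge to their limits by linearity and continuity. Note that $\Tr(S_1Q)>0$ because $S_1$ and $Q$ are positive definite, so $\Tr(S_1Q)=\Tr(S_1^{1/2}QS_1^{1/2})>0$.

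\textbf{Step 2: the quadratic forms and the scale.} Quadratic forms $\theta^\top A\theta$ are jointly continuous in $(\theta,A)$. Combined with $\thetaml\xrightarrow{p}\theta_0$ and the deterministic limits from Step 1, Slutsky's lemma (or the continuous mapping theorem applied to the vector $(\widehat{\sigma}^2,\thetaml,\widehat{\Sigma})$) gives
\[
  \thetaml^\top Q\thetaml\xrightarrow{p}\theta_0^\top Q\theta_0,\qquad
  \thetaml^\top Q\widehat{\Sigma}^{-1}Q\thetaml\xrightarrow{p}\theta_0^\top QS_1Q\theta_0 ,
\]
together with the corresponding limits for the forms involving $\widehat{\Sigma}^{-2}$. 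It follows that $\widehat{\eta}_b\xrightarrow{p}\eta_b^\star$ and $\widehat{\eta}_y\xrightarrow{p}\eta_y^\star$.

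\textbf{Step 3: the denominators (the main obstacle).} The components divide by $\widehat{\eta}$ and $\widehat{\eta}^2$, so the key point is that the limiting scales are strictly positive. This is exactly where the assumptions $\theta_0^\top Q\theta_0>0$ and $\theta_0^\top QS_1Q\theta_0>0$ enter, together with $\Tr(S_1Q)>0$ from Step 1. Since $x\mapsto 1/x$ is continuous at any nonzero point, the event $\{\widehat{\eta}\le \eta^\star/2\}$ has probability tending to zero. Defining the plug-ins arbitrarily on that vanishing event does not affect convergence in probability. The continuous mapping theorem then yields
\[
  \widehat{B}\xrightarrow{p}B,\qquad \widehat{V}\xrightarrow{p}V,\qquad \widehat{H}\xrightarrow{p}H
\]
for both rules.

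\textbf{Step 4: finite candidate sets.} For finitely many kernels $K_1,\dots,K_m$, each with nonzero limiting scale denominators, apply Steps 1--3 kernel by kernel. The uniform statement follows because $\max_{j\le m}|\widehat{B}_j-B_j|\xrightarrow{p}0$: a union bound over finitely many events gives
\[
  P\Bigl(\max_{j}|\widehat{B}_j-B_j|>\epsilon\Bigr)\le \sum_{j=1}^m P\bigl(|\widehat{B}_j-B_j|>\epsilon\bigr)\to0 ,
\]
and the same argument applies to $\widehat{V}_j$ and $\widehat{H}_j$.
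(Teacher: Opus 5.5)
Your proposal is correct and follows essentially the same route as the paper's proof. Both use continuity of matrix inversion, of the quadratic forms, traces and scale maps in $(\thetaml,\widehat{\Sigma},\widehat{\sigma}^2)$, positivity of the limiting scale denominators (so that dividing is safe with probability tending to one), the continuous mapping theorem, and a max-over-finitely-many-kernels argument for the uniform claim. One small correction, which does not affect the argument: the SURE/GCV plug-in $\widehat{H}_y$ is not the EB formula with an extra trace factor. It replaces $\alpha n$ by $\alpha\Tr(\widehat{\Sigma}^{-1}Q)$ and uses the quadratic form $\thetaml^\top Q\widehat{\Sigma}^{-1}Q\widehat{\Sigma}^{-2}Q\thetaml$, which is equally continuous at the population point.
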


\begin{proof}
The ML consistency follows from the fixed-$n$ deterministic-design regression
assumptions because
$\thetaml-\theta_0=(\Phi^\top\Phi)^{-1}\Phi^\top E=O_p(N^{-1/2})$.
Matrix inversion is continuous on the positive definite cone, so
$\widehat{\Sigma}^{-1}\to\Sigma^{-1}$ and
$\widehat{\Sigma}^{-2}\to\Sigma^{-2}$.  The quadratic forms, traces, and scale
maps displayed above are continuous functions of
$(\thetaml,\widehat{\Sigma},\widehat{\sigma}^2)$ on the event where the scale
denominators are bounded away from zero.  The nonzero limiting denominator
assumptions make this event have probability tending to one.  The continuous
mapping theorem therefore gives consistency of the plug-in scale and of each
displayed plug-in component $B$, $V$, and $H$.  For a finite candidate set,
pointwise convergence for each kernel implies convergence of the maximum error
over candidates.
\end{proof}

\begin{corollary}[Uniform consistency for compact scaled-kernel families]
\label{cor:compact-scaled-kernel-uniform}
Let $\Xi$ be compact and let $K(\xi)\succ0$ be continuous on $\Xi$, with
$Q(\xi)=K(\xi)^{-1}$.  Assume the same fixed-order design conditions as above,
with $\widehat{\Sigma}\to\Sigma\succ0$ and
$\thetaml\xrightarrow{p}\theta_0$.
For the scaled EB rule assume
\[
  \inf_{\xi\in\Xi}\theta_0^\top Q(\xi)\theta_0>0,
\]
and for the SURE/GCV-type rule assume
\[
  \inf_{\xi\in\Xi}
  \theta_0^\top Q(\xi)\Sigma^{-1}Q(\xi)\theta_0>0.
\]
Then the scaled-kernel plug-in scales and the corresponding XMSE components
converge uniformly over $\Xi$.  In particular, any compact hyperparameter
search whose empirical criterion converges uniformly satisfies the component
convergence condition in
Theorem~\ref{thm:compact-multiparameter-consistency} below.
\end{corollary}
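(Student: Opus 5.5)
The plan is to upgrade the pointwise continuous-mapping argument of Theorem~\ref{thm:primitive-consistency} to a uniform one, using joint continuity on a compact set. All plug-in quantities are written as a single deterministic map
\[
  F:(\xi,\theta,S,s)\mapsto\bigl(\eta(\xi;\theta,S),\,B(\xi;\theta,S,s),\,V(\xi;\theta,S,s),\,H(\xi;\theta,S,s)\bigr),
\]
where $S$ stands in for $\Sigma$ and $s$ for $\sigma^2$. Evaluating $F$ at $(\thetaml,\widehat{\Sigma},\widehat{\sigma}^2)$ gives the plug-in quantities, and evaluating it at $(\theta_0,\Sigma,\sigma^2)$ gives the limits. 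Each component is a rational expression in $Q(\xi)$, $S^{-1}$, $S^{-2}$, $\theta$ and $s$, built from quadratic forms, traces and products. Its only denominators are the scale numerators/denominators: $\theta^\top Q(\xi)\theta$, $\alpha n$, $\alpha\Tr(S^{-1}Q(\xi))$ and $\eta^2$.

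\textbf{Step 1 (continuity of $Q$).} Since $K(\xi)\succ0$ is continuous on compact $\Xi$, its smallest eigenvalue attains a positive minimum. Inversion is continuous on the positive definite cone, so $Q(\xi)$ is continuous, hence uniformly continuous and bounded on $\Xi$. Moreover $\Tr(S^{-1}Q(\xi))>0$ whenever $S\succ0$, so the trace denominator causes no trouble.

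\textbf{Step 2 (compact neighbourhood).} Choose a compact neighbourhood $\mathcal{N}$ of $(\theta_0,\Sigma,\sigma^2)$ with $S\succ0$ on $\mathcal{N}$. By the infimum assumptions and joint continuity, shrinking $\mathcal{N}$ if needed, the relevant numerators $\theta^\top Q(\xi)\theta$ (EB rule) or $\theta^\top Q(\xi)S^{-1}Q(\xi)\theta$ (SURE/GCV rule) stay above $c/2$ for all $(\xi,\theta,S,s)\in\Xi\times\mathcal{N}$, where $c$ is the assumed infimum. Obtaining this uniform lower bound is the main obstacle: pointwise positivity at each $\xi$ would not suffice. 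It follows from the positive infimum together with uniform continuity of the numerator on the compact set $\Xi\times\mathcal{N}$, via a standard $\varepsilon$--$\delta$ argument. With this bound, $\eta$ is bounded away from zero and $F$ is jointly continuous on $\Xi\times\mathcal{N}$, hence uniformly continuous there.

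\textbf{Step 3 (uniform convergence).} Uniform continuity gives, for every $\varepsilon>0$, a $\delta>0$ such that
\[
  \|(\theta,S,s)-(\theta_0,\Sigma,\sigma^2)\|<\delta
  \;\Longrightarrow\;
  \sup_{\xi\in\Xi}\|F(\xi;\theta,S,s)-F(\xi;\theta_0,\Sigma,\sigma^2)\|<\varepsilon .
\]
Since $\thetaml\xrightarrow{p}\theta_0$, $\widehat{\Sigma}\to\Sigma$ and $\widehat{\sigma}^2\xrightarrow{p}\sigma^2$, the event on the left has probability tending to one. Therefore the supremum over $\Xi$ of the estimation error of the scales and of $\widehat{B},\widehat{V},\widehat{H}$ converges to zero in probability.

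\textbf{Final claim.} Uniform component convergence over $\Xi$, combined with the assumed uniform convergence of the empirical search criterion, is exactly the component-convergence hypothesis of Theorem~\ref{thm:compact-multiparameter-consistency}. So that hypothesis is verified by the uniform bound obtained in Step 3, and the final claim follows.
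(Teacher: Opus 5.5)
Your proposal is correct and follows the paper's own argument, written out in more detail: $Q(\xi)$ is bounded and continuous on compact $\Xi$; the scale denominators stay away from zero by the infimum conditions and $\Tr(S^{-1}Q(\xi))>0$; $F$ is uniformly continuous on $\Xi$ times a compact neighbourhood of $(\theta_0,\Sigma,\sigma^2)$; and a uniform continuous-mapping step finishes. One minor quibble: your aside that pointwise positivity would not suffice is overstated, since continuity of $Q(\xi)$ on compact $\Xi$ already upgrades pointwise positivity to a positive infimum (both conditions in fact hold whenever $\theta_0\neq0$).
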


\begin{proof}
Continuity and compactness imply that the eigenvalues of $K(\xi)$ are bounded
away from zero and infinity on $\Xi$, so $Q(\xi)$ is continuous and uniformly
bounded.  The displayed nondegeneracy assumptions, together with positivity of
$\Tr(\Sigma^{-1}Q(\xi))$, keep all scale denominators bounded away from zero.
Hence the scale maps, traces, and quadratic forms defining $B(\xi)$, $V(\xi)$,
and $H(\xi)$ are uniformly continuous on a compact neighborhood of
$(\theta_0,\Sigma,\sigma^2,\xi)$.  Since
$(\thetaml,\widehat{\Sigma},\widehat{\sigma}^2)$ converges in probability to
$(\theta_0,\Sigma,\sigma^2)$, the uniform continuous mapping argument gives the
claimed sup-norm convergence.
\end{proof}

\begin{theorem}[Compact multi-parameter plug-in consistency]
\label{thm:compact-multiparameter-consistency}
Let $\Xi$ be compact.  Let $P(\xi)$ be a continuous positive definite kernel
family on $\Xi$.  Suppose the population criterion
$m(\xi;\theta_0,\Sigma,\sigma^2)$ and the component maps
$B(\xi)$, $V(\xi)$, and $H(\xi)$ are continuous on $\Xi$, and that
$m$ has a unique minimizer $\xi^\star$ in a neighborhood where the required
scale denominators are nonzero.  Let $\widehat{m}_N(\xi)$ be the same criterion
with $(\theta_0,\Sigma,\sigma^2)$ replaced by consistent estimates, and let
$\widehat{\xi}_N$ be an $o_p(1)$ approximate minimizer of $\widehat{m}_N$ over
$\Xi$.  If
\[
  \sup_{\xi\in\Xi}
  |\widehat{m}_N(\xi)-m(\xi;\theta_0,\Sigma,\sigma^2)|
  \xrightarrow{p}0,
\]
then $\widehat{\xi}_N\xrightarrow{p}\xi^\star$.  If, moreover, the plug-in
component maps converge uniformly,
\[
  \begin{aligned}
  &\sup_{\xi\in\Xi}|\widehat B_N(\xi)-B(\xi)|
  +\sup_{\xi\in\Xi}|\widehat V_N(\xi)-V(\xi)| \\
  &\qquad
  +\sup_{\xi\in\Xi}|\widehat H_N(\xi)-H(\xi)|
  \xrightarrow{p}0,
  \end{aligned}
\]
then
\[
  \begin{gathered}
  \widehat B_N(\widehat\xi_N)\xrightarrow{p}B(\xi^\star),\qquad
  \widehat V_N(\widehat\xi_N)\xrightarrow{p}V(\xi^\star),\\
  \widehat H_N(\widehat\xi_N)\xrightarrow{p}H(\xi^\star).
  \end{gathered}
\]
\end{theorem}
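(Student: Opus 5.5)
The plan is the standard two-step argmin-consistency argument, followed by a uniform-plus-continuity decomposition. Throughout, write $m(\xi)=m(\xi;\theta_0,\Sigma,\sigma^2)$ and $\Delta_N=\sup_{\xi\in\Xi}|\widehat m_N(\xi)-m(\xi)|$, which tends to zero in probability by hypothesis.

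\emph{Step 1: well-separated minimum.} Since $m$ is continuous on the compact set $\Xi$ with unique minimizer $\xi^\star$, for every $\epsilon>0$ the set $\{\xi\in\Xi:\|\xi-\xi^\star\|\ge\epsilon\}$ is compact. The continuous function $m$ attains its infimum there, and this infimum is strictly larger than $m(\xi^\star)$ by uniqueness. Hence
\[
  \delta(\epsilon)=\inf_{\|\xi-\xi^\star\|\ge\epsilon}m(\xi)-m(\xi^\star)>0 .
\]
I read ``unique minimizer in a neighborhood'' as uniqueness over $\Xi$. If only local uniqueness were intended, I would add the global identification condition explicitly, since the argument needs it.

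\emph{Step 2: consistency of $\widehat\xi_N$.} Let $r_N$ denote the approximate-minimization slack, so $\widehat m_N(\widehat\xi_N)\le\inf_\Xi\widehat m_N+r_N$ with $r_N=o_p(1)$. On the event $\|\widehat\xi_N-\xi^\star\|\ge\epsilon$ we have
\[
  m(\xi^\star)+\delta(\epsilon)\le m(\widehat\xi_N)\le\widehat m_N(\widehat\xi_N)+\Delta_N\le\widehat m_N(\xi^\star)+r_N+\Delta_N\le m(\xi^\star)+r_N+2\Delta_N .
\]
This event is therefore contained in $\{r_N+2\Delta_N\ge\delta(\epsilon)\}$, whose probability tends to zero. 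This gives $\widehat\xi_N\xrightarrow{p}\xi^\star$.

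\emph{Step 3: components at the selected point.} For $B$ (and identically for $V$ and $H$), decompose
\[
  |\widehat B_N(\widehat\xi_N)-B(\xi^\star)|\le\sup_{\xi\in\Xi}|\widehat B_N(\xi)-B(\xi)|+|B(\widehat\xi_N)-B(\xi^\star)| .
\]
The first term is $o_p(1)$ by the assumed uniform component convergence. The second term is $o_p(1)$ by the continuous mapping theorem, because $B$ is continuous on $\Xi$ and $\widehat\xi_N\xrightarrow{p}\xi^\star$ by Step 2.

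The only delicate point is Step 1. It is there that compactness and the identification assumption are genuinely used. The nonzero-denominator clause matters only in that it guarantees $B$, $V$ and $H$ are finite and continuous near $\xi^\star$, which the statement already assumes. Measurability of $\widehat\xi_N$ can be handled with outer probabilities if needed.
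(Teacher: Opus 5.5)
Your proposal is correct and follows the paper's proof essentially step by step: a well-separated minimum from compactness, continuity and uniqueness; then the argmin chain bounding $m(\widehat\xi_N)-m(\xi^\star)$ by the optimization slack plus twice the uniform criterion error; then the sup-plus-continuity triangle decomposition for $B$, $V$ and $H$. Your explicit reading of the uniqueness hypothesis as global over $\Xi$ is exactly what the paper's separation step uses implicitly, since that step takes the infimum over all $\xi\in\Xi$ with $\|\xi-\xi^\star\|\ge\varepsilon$.
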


\begin{proof}
Uniform convergence of $\widehat m_N$ and $o_p(1)$ empirical optimality give,
for any fixed $\varepsilon>0$, that the population criterion at
$\widehat\xi_N$ is within $o_p(1)$ of its minimum.  Since $\Xi$ is compact,
$m$ is continuous, and the minimizer is unique, the separation
\[
  \inf_{\xi\in\Xi:\|\xi-\xi^\star\|\ge\varepsilon}
  \{m(\xi;\theta_0,\Sigma,\sigma^2)-m(\xi^\star;\theta_0,\Sigma,\sigma^2)\}
  >0
\]
holds for every $\varepsilon>0$.  Hence
$\widehat\xi_N\xrightarrow{p}\xi^\star$.  The uniform component convergence and
continuity of the population component maps then imply
\[
  \begin{aligned}
  |\widehat B_N(\widehat\xi_N)-B(\xi^\star)|
  &\le
  \sup_{\xi\in\Xi}|\widehat B_N(\xi)-B(\xi)|\\
  &\quad + |B(\widehat\xi_N)-B(\xi^\star)|,
  \end{aligned}
\]
and the same argument applies to $V$ and $H$.
\end{proof}

\section{Plug-in Algorithm}

The oracle weight depends on unknown quantities.  The implementable estimator
uses finite-sample analogues of the XMSE components.

\begin{enumerate}
  \item Compute $\thetaml=(\Phi^\top\Phi)^{-1}\Phi^\top Y$ and
  $\widehat{\Sigma}=\Phi^\top\Phi/N$.
  \item Select a base hyperparameter estimate $\widehat{\eta}$ using scaled
  EB, SURE, or finite-sample GCV.
  \item Form the base regularized estimate
  \[
    \thetaeb =
    [\Phi^\top\Phi+\sigma^2P(\widehat{\eta})^{-1}]^{-1}\Phi^\top Y.
  \]
  \item Compute plug-in XMSE components
  $\widehat{B}$, $\widehat{V}$, and $\widehat{H}$ by replacing
  $(\theta_0,\Sigma,\eta^\star)$ with
  $(\thetaml,\widehat{\Sigma},\widehat{\eta})$ in the scaled-kernel formulas.
  \item Set
  \[
    \widehat{\omega}
    =
    \Pi_{[0,1]}
    \left(
      -\frac{\widehat{V}+\widehat{H}}{2\widehat{B}+\rho_N}
    \right),
  \]
  with a small stabilizing $\rho_N>0$.
  \item Return
  \[
    \widehat{\theta}_{\mathrm{mix}}
    =
    \thetaml+\widehat{\omega}(\thetaeb-\thetaml).
  \]
\end{enumerate}

If several kernels are available, the same computation can be repeated for a
finite candidate set $\mathcal{K}=\{K_1,\ldots,K_M\}$.  For each candidate,
compute the plug-in minimized criterion
\[
  \widehat{q}_j
  =
  \widehat{B}_j\widehat{\omega}_j^2+
  (\widehat{V}_j+\widehat{H}_j)\widehat{\omega}_j,
\]
and select the kernel with the smallest $\widehat{q}_j$.  This is an
XMSE-based kernel-selection rule.

Near the zero-bias boundary, a thresholded version can replace the projected
ratio in Step 5: if $\widehat{B}\le\tau_N$, set
$\widehat{\omega}=1\{\widehat{V}+\widehat{H}<0\}$; otherwise use the projected
ratio.  Theorem~\ref{thm:threshold} below gives the corresponding
boundary justification.

\begin{remark}[{Why the weight is restricted to $[0,1]$}]
\label{rem:projection}
The unconstrained minimizer $-C_{\mathrm{EB}}/(2B_{\mathrm{EB}})$ of the
quadratic criterion can leave $[0,1]$: when $C_{\mathrm{EB}}>0$ it is
negative, so an unconstrained rule would extrapolate \emph{away} from EB and
attain the strictly negative criterion value
$-C_{\mathrm{EB}}^2/(4B_{\mathrm{EB}})$, whereas the projected rule returns
ML with criterion value $0$; symmetrically, when
$C_{\mathrm{EB}}<-2B_{\mathrm{EB}}$ the unconstrained minimizer exceeds $1$
and projection forfeits $(C_{\mathrm{EB}}+2B_{\mathrm{EB}})^2/
(4B_{\mathrm{EB}})$ relative to over-shrinking beyond EB.  We nevertheless
restrict the weight to convex combinations, for three reasons.  First, the
quadratic criterion is a second-order asymptotic surrogate whose accuracy we
can only argue for on the segment between the two anchor estimators; an
extrapolated weight amplifies exactly the higher-order terms that the
expansion drops.  Second, on $[0,1]$ the rule degrades gracefully under
component-estimation error, since
$\|\thetamix(\omega)-\thetaml\|_2\le\|\thetaeb-\thetaml\|_2$ and the convex
criterion satisfies $q(\omega)\le\max\{0,\xmse(\thetaeb)\}$ for all
$\omega\in[0,1]$, while extrapolation has no such bound and turns weight
estimation error into unbounded risk inflation.  Third, the projected rule
preserves the safeguard semantics of the method: $\widehat{\omega}$ answers
``how much of the EB correction should be accepted,'' and a harmful
correction is rejected rather than reversed.  The experiments report how
often the unconstrained plug-in ratio actually leaves $[0,1]$, so the cost of
the projection is visible rather than hidden.
\end{remark}

\begin{remark}[Numerical choices in the experiments]
\label{rem:numerical-choices}
The theory lets $\rho_N\downarrow0$ at any rate; the experiments use the
fixed value $\rho=10^{-10}$, which acts purely as a division guard and is
many orders of magnitude below every observed $2\widehat{B}$, so replacing it
by a decreasing sequence would not change any reported digit.  The
thresholded boundary rule with $\tau_N$ exists for the zero-bias boundary
case $B_{\mathrm{EB}}=0$; in all reported experiments
$\widehat{B}>0$ held in every realization, so the threshold branch was never
active, and the reported results use the projected-ratio rule throughout.
\end{remark}

\begin{remark}[Finite-sample bias of the plug-in components and a
trace-corrected variant]
\label{rem:corrected}
The plug-in components evaluate quadratic forms at $\thetaml$.  Since
$\thetaml\sim\mathcal{N}(\theta_0,W)$ with $W=\sigma^2(\Phi^\top\Phi)^{-1}$,
\[
  \E[\thetaml^\top A\thetaml]
  =
  \theta_0^\top A\theta_0+\Tr(AW),
\]
so each plug-in quadratic form carries a systematic $O(\sigma^2/N)$ trace
term---upward for positive semidefinite $A$---that propagates into
$\widehat{\eta}$, $\widehat{B}$, and $\widehat{H}$ and is one identifiable
source of finite-sample miscalibration of the selected weight.  The
\emph{trace-corrected} variant therefore replaces every plug-in quadratic
form, including the one inside the plug-in scale $\widehat{\eta}$, by its
unbiased version $\thetaml^\top A\thetaml-\Tr(A\widehat{W})$ with
$\widehat{W}=\widehat{\sigma}^2(\Phi^\top\Phi)^{-1}$, floored at zero
whenever the population quantity is nonnegative.  The corrected
$\widehat{B}$ can then reach zero, in which case the projected ratio with
$\rho_N$ degenerates to the $0/1$ decision of the thresholded rule, which is
the intended boundary behavior.  The base estimator $\thetaeb$ is left
unchanged, so the corrected variant isolates the calibration of the mixing
weight.  This correction follows the same finite-sample motivation as the
approximate-XMSE refinement in Section~IV of \cite{ju2026xmse}; the
experiments below report it as a separate column, and the public benchmarks
use it to test whether the identified SURE-row failure mode is a
weight-calibration artifact.
\end{remark}

\begin{theorem}[Plug-in consistency away from the zero-bias boundary]
\label{thm:plugin-consistency}
Let $\widehat{B}$, $\widehat{V}$, and $\widehat{H}$ be finite-sample plug-in
estimators of $B_{\mathrm{EB}}$, $V_{\mathrm{EB}}$, and $H_{\mathrm{EB}}$,
respectively, and assume
\[
  \widehat{B}\xrightarrow{p}B_{\mathrm{EB}},\qquad
  \widehat{V}\xrightarrow{p}V_{\mathrm{EB}},\qquad
  \widehat{H}\xrightarrow{p}H_{\mathrm{EB}}.
\]
Let $\rho_N>0$ be deterministic with $\rho_N\downarrow0$ and define
\[
  \widehat{\omega}_{\mathrm{XMSE}}
  =
  \Pi_{[0,1]}
  \left(
    -\frac{\widehat{V}+\widehat{H}}{2\widehat{B}+\rho_N}
  \right).
\]
If $B_{\mathrm{EB}}>0$, then
\[
  \widehat{\omega}_{\mathrm{XMSE}}\xrightarrow{p}\omega^\star.
\]
\end{theorem}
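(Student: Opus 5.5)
The argument is a continuous-mapping argument applied to the map $(b,c,r)\mapsto \Pi_{[0,1]}(-c/(2b+r))$. When $B_{\mathrm{EB}}>0$, Theorem~\ref{thm:oracle-weight} gives
\[
  \omega^\star=\Pi_{[0,1]}\bigl(-C_{\mathrm{EB}}/(2B_{\mathrm{EB}})\bigr),
  \qquad C_{\mathrm{EB}}=V_{\mathrm{EB}}+H_{\mathrm{EB}}.
\]
So it suffices to show that the unprojected plug-in ratio converges in probability to $-C_{\mathrm{EB}}/(2B_{\mathrm{EB}})$, and then to use continuity of the projection.

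First, $\widehat{C}:=\widehat{V}+\widehat{H}\xrightarrow{p}C_{\mathrm{EB}}$, since convergence in probability is preserved under sums. Because $\rho_N$ is deterministic with $\rho_N\to0$, we also have $2\widehat{B}+\rho_N\xrightarrow{p}2B_{\mathrm{EB}}>0$.

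Second, we control the denominator away from zero. On the event
\[
  \mathcal{E}_N=\{|\widehat{B}-B_{\mathrm{EB}}|<B_{\mathrm{EB}}/2\},
\]
we have $2\widehat{B}+\rho_N\ge B_{\mathrm{EB}}>0$, since $\rho_N>0$. The event $\mathcal{E}_N$ has probability tending to one. The map $(c,d)\mapsto -c/d$ is continuous on $\{d>0\}$, in particular at $(C_{\mathrm{EB}},2B_{\mathrm{EB}})$. The continuous mapping theorem, applied on $\mathcal{E}_N$, therefore gives
\[
  -\widehat{C}/(2\widehat{B}+\rho_N)\xrightarrow{p}-C_{\mathrm{EB}}/(2B_{\mathrm{EB}}).
\]
If one prefers to avoid conditioning, one can define the ratio arbitrarily off $\mathcal{E}_N$ and note that this changes nothing in probability.

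Third, $\Pi_{[0,1]}$ is $1$-Lipschitz. Hence
\[
  |\widehat{\omega}_{\mathrm{XMSE}}-\omega^\star|\le \Bigl|-\widehat{C}/(2\widehat{B}+\rho_N)+C_{\mathrm{EB}}/(2B_{\mathrm{EB}})\Bigr|\xrightarrow{p}0.
\]
This holds in all three regimes of Corollary~\ref{cor:oracle-regimes}, including the boundary cases where the projection is active, so no case split is needed.

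The only real obstacle is the denominator step. Without $B_{\mathrm{EB}}>0$, the ratio can blow up or fail to converge: when $B_{\mathrm{EB}}=0$, the sign of $\widehat{C}$ and the relative rates of $\widehat{B}$ and $\rho_N$ matter. This is exactly why the statement excludes the zero-bias boundary, leaving it to the thresholded rule. With $B_{\mathrm{EB}}>0$ the step is routine.
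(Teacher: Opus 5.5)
Your proof is correct and follows the same continuous-mapping route as the paper. Both arguments take $\widehat{C}\to C_{\mathrm{EB}}$ and $2\widehat{B}+\rho_N\to 2B_{\mathrm{EB}}>0$, then use continuity of the ratio and of $\Pi_{[0,1]}$; your event $\mathcal{E}_N$ and the $1$-Lipschitz bound on the projection just spell out steps the paper states briefly.
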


\begin{proof}
Let $C_{\mathrm{EB}}=V_{\mathrm{EB}}+H_{\mathrm{EB}}$ and
$\widehat{C}=\widehat{V}+\widehat{H}$.  By assumption,
$\widehat{C}\xrightarrow{p}C_{\mathrm{EB}}$.  Since $B_{\mathrm{EB}}>0$ and
$\rho_N\to0$,
\[
  2\widehat{B}+\rho_N \xrightarrow{p} 2B_{\mathrm{EB}}>0.
\]
Therefore
\[
  -\frac{\widehat{C}}{2\widehat{B}+\rho_N}
  \xrightarrow{p}
  -\frac{C_{\mathrm{EB}}}{2B_{\mathrm{EB}}}.
\]
The projection $\Pi_{[0,1]}$ is continuous, so the continuous mapping theorem
gives the claim.
\end{proof}

\begin{theorem}[Plug-in oracle inequality]
\label{thm:oracle-inequality}
Let
\[
  q(\omega)=B_{\mathrm{EB}}\omega^2+
  (V_{\mathrm{EB}}+H_{\mathrm{EB}})\omega
\]
and
\[
  \widehat{q}(\omega)=\widehat{B}\omega^2+
  (\widehat{V}+\widehat{H})\omega.
\]
Let $\omega^\star\in\arg\min_{\omega\in[0,1]}q(\omega)$ and let
$\widehat{\omega}\in\arg\min_{\omega\in[0,1]}\widehat{q}(\omega)$, with an
arbitrary deterministic tie-breaking rule.  Define
\[
  e_N=\sup_{\omega\in[0,1]}|\widehat{q}(\omega)-q(\omega)|.
\]
Then, deterministically,
\[
  q(\widehat{\omega})-q(\omega^\star)\le 2e_N.
\]
Consequently, if
$|\widehat{B}-B_{\mathrm{EB}}|+
|\widehat{V}-V_{\mathrm{EB}}|+
|\widehat{H}-H_{\mathrm{EB}}|=O_p(r_N)$ for some deterministic
$r_N\downarrow0$, then
\[
  q(\widehat{\omega})-\min_{\omega\in[0,1]}q(\omega)=O_p(r_N).
\]
\end{theorem}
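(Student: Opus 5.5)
The deterministic inequality is the standard argmin-perturbation argument, so I would carry it out directly. Because $\widehat{\omega}$ minimizes $\widehat{q}$ over $[0,1]$ and $\omega^\star\in[0,1]$, we have $\widehat{q}(\widehat{\omega})\le\widehat{q}(\omega^\star)$. The regret then splits into three terms:
\[
  q(\widehat{\omega})-q(\omega^\star)
  =
  [q(\widehat{\omega})-\widehat{q}(\widehat{\omega})]
  +[\widehat{q}(\widehat{\omega})-\widehat{q}(\omega^\star)]
  +[\widehat{q}(\omega^\star)-q(\omega^\star)].
\]
The middle term is nonpositive. The outer two are each bounded by $e_N$, since both $\widehat{\omega}$ and $\omega^\star$ lie in $[0,1]$, where the supremum defining $e_N$ is taken. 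This gives $q(\widehat{\omega})-q(\omega^\star)\le 2e_N$ for every realization. Existence of both minimizers is automatic, because $q$ and $\widehat{q}$ are continuous on a compact interval. The tie-breaking rule plays no role, since the bound holds for any selected minimizer.

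For the rate statement, I would bound $e_N$ by the component errors. For $\omega\in[0,1]$ we have $\omega^2\le1$ and $\omega\le1$, so
\[
  |\widehat{q}(\omega)-q(\omega)|
  \le
  |\widehat{B}-B_{\mathrm{EB}}|\,\omega^2
  +\bigl(|\widehat{V}-V_{\mathrm{EB}}|+|\widehat{H}-H_{\mathrm{EB}}|\bigr)\omega .
\]
Hence $e_N\le|\widehat{B}-B_{\mathrm{EB}}|+|\widehat{V}-V_{\mathrm{EB}}|+|\widehat{H}-H_{\mathrm{EB}}|=O_p(r_N)$. Combining this with the deterministic bound gives $q(\widehat{\omega})-\min_{[0,1]}q\le 2e_N=O_p(r_N)$. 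The left side is also nonnegative by the definition of $\omega^\star$, so the two-sided $O_p$ statement follows.

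There is no real obstacle here. The one point to note is that $\widehat{\omega}$ in this theorem is the exact minimizer of $\widehat{q}$, not the $\rho_N$-stabilized ratio of Theorem~\ref{thm:plugin-consistency}. The result is stated for the argmin, so no boundary case analysis (for example $\widehat{B}\le0$) is needed. Even if $\widehat{B}$ is negative, $\widehat{q}$ is continuous on $[0,1]$ and attains its minimum, so the argument still applies.
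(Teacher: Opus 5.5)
Your proposal is correct and follows essentially the same route as the paper's proof: the same three-term add-and-subtract decomposition (using $\widehat{q}(\widehat{\omega})\le\widehat{q}(\omega^\star)$) gives the deterministic $2e_N$ bound, and the same bound $e_N\le|\widehat{B}-B_{\mathrm{EB}}|+|\widehat{V}-V_{\mathrm{EB}}|+|\widehat{H}-H_{\mathrm{EB}}|$ gives the rate. Your additional remarks are accurate but not needed for the argument. These are the existence of minimizers by compactness, nonnegativity of the regret, and the observation that the result concerns the exact argmin rather than the $\rho_N$-stabilized ratio, so a negative $\widehat{B}$ causes no difficulty.
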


\begin{proof}
The optimality of $\widehat{\omega}$ for $\widehat{q}$ gives
$\widehat{q}(\widehat{\omega})\le\widehat{q}(\omega^\star)$.  Therefore
\[
\begin{aligned}
q(\widehat{\omega})-q(\omega^\star)
&=
\{q(\widehat{\omega})-\widehat{q}(\widehat{\omega})\}
+\{\widehat{q}(\widehat{\omega})-\widehat{q}(\omega^\star)\} \\
&\quad
+\{\widehat{q}(\omega^\star)-q(\omega^\star)\} \\
&\le
|q(\widehat{\omega})-\widehat{q}(\widehat{\omega})|
+|\widehat{q}(\omega^\star)-q(\omega^\star)|
\le 2e_N.
\end{aligned}
\]
For $\omega\in[0,1]$,
\[
  |\widehat{q}(\omega)-q(\omega)|
  \le
  |\widehat{B}-B_{\mathrm{EB}}|
  +|\widehat{V}-V_{\mathrm{EB}}|
  +|\widehat{H}-H_{\mathrm{EB}}|,
\]
so the stochastic statement follows.
\end{proof}

\begin{theorem}[Plug-in oracle regret rate]
\label{thm:regret-rate}
Under the assumptions of Theorem~\ref{thm:plugin-consistency}, define
\[
  q(\omega)=B_{\mathrm{EB}}\omega^2+
  (V_{\mathrm{EB}}+H_{\mathrm{EB}})\omega.
\]
Assume in addition that $B_{\mathrm{EB}}>0$,
$\omega^\star\in(0,1)$, and for some deterministic sequence $r_N\downarrow0$,
\[
  |\widehat{B}-B_{\mathrm{EB}}|
  +
  |\widehat{V}-V_{\mathrm{EB}}|
  +
  |\widehat{H}-H_{\mathrm{EB}}|
  +
  \rho_N
  =
  O_p(r_N).
\]
Then
\[
  \widehat{\omega}_{\mathrm{XMSE}}-\omega^\star=O_p(r_N)
\]
and
\[
  q(\widehat{\omega}_{\mathrm{XMSE}})
  -
  q(\omega^\star)
  =
  O_p(r_N^2).
\]
\end{theorem}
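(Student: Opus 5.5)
The proof has two parts. First we show that the unprojected plug-in ratio is $r_N$-close to the unconstrained oracle minimizer, and that the projection is eventually inactive. Then we use the fact that $q$ is stationary at an interior $\omega^\star$, so the regret is quadratic in the weight error.

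Write $C_{\mathrm{EB}}=V_{\mathrm{EB}}+H_{\mathrm{EB}}$, $\widehat{C}=\widehat{V}+\widehat{H}$, $\widehat{D}=2\widehat{B}+\rho_N$ and $D=2B_{\mathrm{EB}}>0$. Set $\widetilde{\omega}=-\widehat{C}/\widehat{D}$ and $\omega_u=-C_{\mathrm{EB}}/D$. Because $B_{\mathrm{EB}}>0$ and $\omega^\star\in(0,1)$, Theorem~\ref{thm:oracle-weight} gives $\omega^\star=\omega_u$. The rate assumption gives $|\widehat{C}-C_{\mathrm{EB}}|=O_p(r_N)$ and $|\widehat{D}-D|\le 2|\widehat{B}-B_{\mathrm{EB}}|+\rho_N=O_p(r_N)$.

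The algebraic identity
\[
\widetilde{\omega}-\omega_u=\frac{-(\widehat{C}-C_{\mathrm{EB}})D+C_{\mathrm{EB}}(\widehat{D}-D)}{\widehat{D}D}
\]
reduces the weight bound to controlling the denominator. On the event $\{\widehat{D}\ge D/2\}$ we have $|\widetilde{\omega}-\omega_u|\le 2(D|\widehat{C}-C_{\mathrm{EB}}|+|C_{\mathrm{EB}}||\widehat{D}-D|)/D^2$. This event has probability tending to one because $\widehat{D}\xrightarrow{p}D$. Hence $\widetilde{\omega}-\omega^\star=O_p(r_N)$, with $O_p$ applied off an event of vanishing probability.

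Next, $\Pi_{[0,1]}$ is $1$-Lipschitz and fixes $\omega^\star$, so $|\widehat{\omega}_{\mathrm{XMSE}}-\omega^\star|\le|\widetilde{\omega}-\omega^\star|=O_p(r_N)$. This gives the first claim directly; we do not need to show that the projection is inactive. The interiority of $\omega^\star$ is still used, but only through $\omega^\star=\omega_u$.

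For the regret, $q$ is an exact quadratic with $q'(\omega^\star)=2B_{\mathrm{EB}}\omega^\star+C_{\mathrm{EB}}=0$ at the interior minimizer. Therefore, for every $\omega$,
\[
q(\omega)-q(\omega^\star)=B_{\mathrm{EB}}(\omega-\omega^\star)^2,
\]
with no remainder term. Substituting $\omega=\widehat{\omega}_{\mathrm{XMSE}}$ gives $O_p(r_N^2)$.

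The main obstacle is justifying the first-order vanishing. This is exactly where $\omega^\star\in(0,1)$ enters. At a boundary optimum $q'(\omega^\star)\neq0$ in general, and only the $O_p(r_N)$ bound of Theorem~\ref{thm:oracle-inequality} would survive. The remaining delicate point is the random denominator: restricting to the high-probability event $\widehat{D}\ge D/2$ handles it, since events of vanishing probability do not affect $O_p$ statements.
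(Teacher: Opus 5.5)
Your proof is correct and is essentially the paper's argument: the same common-denominator identity for the unprojected ratio, a denominator bounded away from zero with probability tending to one, and the exact identity $q(\omega)-q(\omega^\star)=B_{\mathrm{EB}}(\omega-\omega^\star)^2$ coming from $q'(\omega^\star)=0$. The one deviation is that you handle the projection through its $1$-Lipschitz property rather than the paper's ``projection inactive with probability tending to one''; this is slightly cleaner and in fact gives the weight rate for every $B_{\mathrm{EB}}>0$, since $\omega^\star=\Pi_{[0,1]}(\omega_u)$ always.

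One aside in your last paragraph is inaccurate. With $B_{\mathrm{EB}}>0$, a boundary optimum with $q'(\omega^\star)\neq0$ forces $\omega_u$ to lie strictly outside $[0,1]$. Then $\widehat{\omega}_{\mathrm{XMSE}}=\omega^\star$ with probability tending to one, and the regret is zero rather than merely $O_p(r_N)$. Together with your argument for the cases where $q'(\omega^\star)=0$, both conclusions therefore hold for every $B_{\mathrm{EB}}>0$, without the interiority assumption.
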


\begin{proof}
Let $C_{\mathrm{EB}}=V_{\mathrm{EB}}+H_{\mathrm{EB}}$ and
$\widehat{C}=\widehat{V}+\widehat{H}$.  Since
$\omega^\star\in(0,1)$, the projection is inactive with probability tending to
one.  On this event,
\[
\begin{aligned}
\widehat{\omega}_{\mathrm{XMSE}}-\omega^\star
&=
-\frac{\widehat{C}}{2\widehat{B}+\rho_N}
+\frac{C_{\mathrm{EB}}}{2B_{\mathrm{EB}}}  \\
&=
\frac{C_{\mathrm{EB}}(2\widehat{B}+\rho_N)
-2B_{\mathrm{EB}}\widehat{C}}
{2B_{\mathrm{EB}}(2\widehat{B}+\rho_N)}.
\end{aligned}
\]
The denominator is bounded away from zero with probability tending to one, and
the numerator is $O_p(r_N)$.  Hence
$\widehat{\omega}_{\mathrm{XMSE}}-\omega^\star=O_p(r_N)$.
Because $\omega^\star$ is the unconstrained minimizer in the interior,
$q'(\omega^\star)=0$ and
\[
  q(\widehat{\omega}_{\mathrm{XMSE}})-q(\omega^\star)
  =
  B_{\mathrm{EB}}
  (\widehat{\omega}_{\mathrm{XMSE}}-\omega^\star)^2
  =
  O_p(r_N^2).
\]
\end{proof}

For each \emph{fixed} weight $\omega\in[0,1]$, define the deterministic
fixed-weight risk curve
\[
  \Delta_N(\omega)
  =
  N^2\{
  \mathrm{MSE}(\thetamix(\omega))
  -
  \mathrm{MSE}(\thetaml)
  \},
\]
where the expectations are taken over all randomness with the weight held
fixed at $\omega$.  Thus $\omega\mapsto\Delta_N(\omega)$ is a deterministic
function, and Theorem~\ref{thm:fixed-weight} states its pointwise convergence
to $q(\omega)$.  The next result upgrades this to uniform convergence on
$[0,1]$, using no assumption beyond Assumption~\ref{ass:inherited}.

\begin{theorem}[Uniform fixed-weight convergence]
\label{thm:uniform-curve}
Under Assumption~\ref{ass:inherited},
\[
  \varepsilon_N
  :=
  \sup_{\omega\in[0,1]}
  |\Delta_N(\omega)-q(\omega)|
  \longrightarrow 0.
\]
\end{theorem}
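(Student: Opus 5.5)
The plan is to exploit the fact that $\Delta_N(\omega)$ is itself an exact quadratic polynomial in $\omega$ with deterministic coefficients, so that uniform convergence on $[0,1]$ reduces to convergence of finitely many coefficients. As in the proof of Theorem~\ref{thm:fixed-weight}, since $\thetamix(\omega)-\thetaml=\omega d_N$ with $\omega$ deterministic, expanding the square gives, for every $N$ and every $\omega\in[0,1]$,
\[
  \Delta_N(\omega)=a_N\omega^2+c_N\omega,
\]
where
\[
  a_N=N^2\,\E\|d_N\|_2^2,
  \qquad
  c_N=2N^2\,\E\{(\thetaml-\theta_0)^\top d_N\}.
\]
These coefficients are finite real numbers because all moments exist under Assumption~\ref{ass:inherited}, and they do not depend on $\omega$.

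The next step is to identify the limits of the coefficients. Using $d_N=d_{1,N}+d_{2,N}$ and linearity of expectation, parts (ii) and (iii) of Assumption~\ref{ass:inherited} give $c_N\to V_{\mathrm{EB}}+H_{\mathrm{EB}}=C_{\mathrm{EB}}$. For the quadratic coefficient I would use the decomposition
\[
  a_N=\|N\E[d_N]\|_2^2+N^2\E\|d_N-\E[d_N]\|_2^2 .
\]
By (i) and continuity of the norm, the first term converges to $\|b_{\mathrm{EB}}\|_2^2=B_{\mathrm{EB}}$. By (iv), the second term converges to $0$. Hence $a_N\to B_{\mathrm{EB}}$.

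Finally, for every $\omega\in[0,1]$ we have $\omega^2\le1$ and $\omega\le1$, so
\[
  |\Delta_N(\omega)-q(\omega)|\le|a_N-B_{\mathrm{EB}}|\,\omega^2+|c_N-C_{\mathrm{EB}}|\,\omega\le|a_N-B_{\mathrm{EB}}|+|c_N-C_{\mathrm{EB}}| .
\]
Taking the supremum over $\omega$ gives $\varepsilon_N\le|a_N-B_{\mathrm{EB}}|+|c_N-C_{\mathrm{EB}}|\to0$.

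I do not expect a genuine obstacle here. The only point that needs care is that the coefficients of $\Delta_N$ are exactly independent of $\omega$, with no hidden $\omega$-dependent remainder. This holds because the weight is deterministic and can be pulled outside the expectation. The argument therefore says nothing about the data-dependent weight $\widehat{\omega}$, which is consistent with the paper leaving that case open.
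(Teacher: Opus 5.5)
Your proposal is correct and is essentially the paper's own proof. Both write $\Delta_N(\omega)$ as an exact quadratic in $\omega$ with deterministic coefficients $N^2\E\|d_N\|_2^2$ and $2N^2\E\{(\thetaml-\theta_0)^\top d_N\}$, get their limits $B_{\mathrm{EB}}$ and $C_{\mathrm{EB}}$ from (i)--(iv) of Assumption~\ref{ass:inherited} via the mean/centered split of $\E\|d_N\|_2^2$, and bound the supremum over $[0,1]$ by the sum of the two coefficient errors; only the coefficient names differ (the paper calls the linear and quadratic coefficients $a_N$ and $b_N$).
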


\begin{proof}
For every $N$ the curve is exactly a quadratic polynomial in $\omega$,
\[
  \Delta_N(\omega)=a_N\,\omega+b_N\,\omega^2,
\]
with coefficients
$a_N=2N^2\,\E\{(\thetaml-\theta_0)^\top d_N\}$ and
$b_N=N^2\,\E\,\|d_N\|_2^2$.
Assumption~\ref{ass:inherited} gives
$a_N\to V_{\mathrm{EB}}+H_{\mathrm{EB}}=C_{\mathrm{EB}}$ and
$b_N=N^2\|\E[d_N]\|_2^2+N^2\E\|d_N-\E[d_N]\|_2^2\to B_{\mathrm{EB}}$.
Hence, for all $\omega\in[0,1]$,
\[
  |\Delta_N(\omega)-q(\omega)|
  \le
  |a_N-C_{\mathrm{EB}}|+|b_N-B_{\mathrm{EB}}|,
\]
and the right-hand side is a deterministic null sequence.
\end{proof}

\begin{theorem}[Oracle transfer along the fixed-weight risk curve]
\label{thm:plugin-xmse-transfer}
Let the assumptions of Theorem~\ref{thm:regret-rate} hold, and let
$\varepsilon_N$ be as in Theorem~\ref{thm:uniform-curve}.  Then the
fixed-weight risk curve evaluated at the selected weight satisfies
\[
  \Delta_N(\widehat{\omega}_{\mathrm{XMSE}})
  -
  q(\omega^\star)
  =
  O_p(r_N^2)+O(\varepsilon_N)
\]
and
\[
  \Delta_N(\widehat{\omega}_{\mathrm{XMSE}})
  -
  \Delta_N(\omega^\star)
  =
  O_p(r_N^2)+O(\varepsilon_N).
\]
\end{theorem}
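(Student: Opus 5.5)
The plan is a triangle-inequality decomposition that combines Theorem~\ref{thm:uniform-curve} with Theorem~\ref{thm:regret-rate}. For the first claim, write
\[
  \Delta_N(\widehat{\omega}_{\mathrm{XMSE}})-q(\omega^\star)
  =
  \{\Delta_N(\widehat{\omega}_{\mathrm{XMSE}})-q(\widehat{\omega}_{\mathrm{XMSE}})\}
  +\{q(\widehat{\omega}_{\mathrm{XMSE}})-q(\omega^\star)\}.
\]
The first bracket involves the deterministic curve $\Delta_N$ evaluated at a random point of $[0,1]$. Since $\widehat{\omega}_{\mathrm{XMSE}}\in[0,1]$ always holds because of the projection, this bracket is bounded in absolute value by $\sup_{\omega\in[0,1]}|\Delta_N(\omega)-q(\omega)|=\varepsilon_N$ surely, not just with high probability. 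The second bracket is $O_p(r_N^2)$ by Theorem~\ref{thm:regret-rate}. Adding the two gives $O_p(r_N^2)+O(\varepsilon_N)$.

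For the second claim, subtract the deterministic quantity $\Delta_N(\omega^\star)-q(\omega^\star)$. Its absolute value is at most $\varepsilon_N$ by the same uniform bound, applied at the fixed point $\omega^\star\in[0,1]$. Then
\[
  \Delta_N(\widehat{\omega}_{\mathrm{XMSE}})-\Delta_N(\omega^\star)
  =
  \{\Delta_N(\widehat{\omega}_{\mathrm{XMSE}})-q(\omega^\star)\}
  -\{\Delta_N(\omega^\star)-q(\omega^\star)\},
\]
which is $O_p(r_N^2)+O(\varepsilon_N)+O(\varepsilon_N)$.

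The one point that needs care is the meaning of $\Delta_N(\widehat{\omega}_{\mathrm{XMSE}})$. It must be read as the composition of the deterministic function $\omega\mapsto\Delta_N(\omega)$ with the random weight, and not as the risk of the randomly weighted estimator. Under this reading, the sure bound $|\Delta_N(\omega)-q(\omega)|\le\varepsilon_N$ holds for every $\omega\in[0,1]$ simultaneously, so it applies at the random point without any measurability or independence argument. This is exactly why the uniform, rather than pointwise, convergence of Theorem~\ref{thm:uniform-curve} is needed.

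The remaining steps are routine. They are the bookkeeping of $O_p$ and $O$ terms, and the observation, made inside the proof of Theorem~\ref{thm:regret-rate}, that the interior condition $\omega^\star\in(0,1)$ is what yields the squared rate $r_N^2$ rather than $r_N$.
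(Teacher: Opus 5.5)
Your proposal is correct and is essentially the paper's own proof. You add the sure bound $|\Delta_N(\widehat{\omega}_{\mathrm{XMSE}})-q(\widehat{\omega}_{\mathrm{XMSE}})|\le\varepsilon_N$, which holds because the projected weight always lies in $[0,1]$, to the $O_p(r_N^2)$ criterion regret from Theorem~\ref{thm:regret-rate}, and you obtain the second claim by applying the same uniform bound once more at the fixed point $\omega^\star$.
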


\begin{proof}
Since $\widehat{\omega}_{\mathrm{XMSE}}\in[0,1]$ always,
$|\Delta_N(\widehat{\omega}_{\mathrm{XMSE}})
-q(\widehat{\omega}_{\mathrm{XMSE}})|\le\varepsilon_N$ surely, by
Theorem~\ref{thm:uniform-curve}.  Theorem~\ref{thm:regret-rate} gives
$q(\widehat{\omega}_{\mathrm{XMSE}})-q(\omega^\star)=O_p(r_N^2)$, and adding
the two bounds proves the first claim.  The second claim follows by also
applying $|\Delta_N(\omega^\star)-q(\omega^\star)|\le\varepsilon_N$.
\end{proof}

\begin{remark}[What the transfer result does and does not control]
\label{rem:transfer-scope}
$\Delta_N(\widehat{\omega}_{\mathrm{XMSE}})$ is the deterministic risk curve
evaluated at the realized weight: it is the second-order risk the mixed
estimator would have if the selected weight were frozen and reused on
independent data.  It is \emph{not} the XMSE of the data-dependent estimator
$\thetamix(\widehat{\omega}_{\mathrm{XMSE}})$, in which the random weight
sits inside the expectation.  The difference consists of cross-moments
between $\widehat{\omega}_{\mathrm{XMSE}}$ and
$(\thetaml-\theta_0,\thetaeb-\thetaml)$, which arise because the weight and
the EB correction are computed from the same data.  Controlling these
cross-moments requires a joint expansion of the weight map and the EB
correction; for the scaled-kernel one-step rules of
Remark~\ref{rem:one-step}, $\widehat{\omega}_{\mathrm{XMSE}}$ is a smooth
function of $\thetaml$ away from the projection boundary, so a delta-method
analysis appears feasible, but it is beyond the scope of this paper and is
recorded as an open problem in the Limitations section.
\end{remark}

\begin{theorem}[Finite-candidate kernel selection]
\label{thm:finite-kernel-selection}
Consider a finite set of candidate kernels
$\mathcal{K}=\{K_1,\ldots,K_M\}$.  Let $q_j^\star$ denote the oracle minimized
XMSE criterion for candidate $K_j$, and let $\widehat{q}_j$ be its plug-in
counterpart.  Assume
\[
  \widehat{q}_j\xrightarrow{p}q_j^\star,\qquad j=1,\ldots,M.
\]
If the oracle minimizer is unique, i.e., there exists $j^\star$ such that
\[
  q_{j^\star}^\star < \min_{j\ne j^\star} q_j^\star,
\]
then the plug-in selector
\[
  \widehat{j}=\arg\min_{1\le j\le M}\widehat{q}_j
\]
satisfies
\[
  \Pr(\widehat{j}=j^\star)\to1.
\]
\end{theorem}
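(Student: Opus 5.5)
The argument is a standard finite-dimensional gap argument. First define the oracle gap
\[
  \delta=\min_{j\ne j^\star}q_j^\star-q_{j^\star}^\star>0,
\]
which is strictly positive by the uniqueness assumption. Next, consider the event
\[
  A_N=\Bigl\{\max_{1\le j\le M}|\widehat{q}_j-q_j^\star|<\delta/2\Bigr\}.
\]
Because $M$ is fixed and finite, the componentwise convergence $\widehat{q}_j\xrightarrow{p}q_j^\star$ gives convergence of the maximum error. The union bound makes this explicit:
\[
  \Pr(A_N^c)\le\sum_{j=1}^M\Pr(|\widehat{q}_j-q_j^\star|\ge\delta/2)\to0.
\]
This is the same finite-set step used at the end of the proof of Theorem~\ref{thm:primitive-consistency}.

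On $A_N$, every competitor $j\ne j^\star$ satisfies
\[
  \widehat{q}_j>q_j^\star-\delta/2\ge q_{j^\star}^\star+\delta/2>\widehat{q}_{j^\star}.
\]
So $j^\star$ is the unique minimizer of $\widehat{q}_j$, and $\widehat{j}=j^\star$ whatever tie-breaking convention defines the $\arg\min$. Hence $\Pr(\widehat{j}=j^\star)\ge\Pr(A_N)\to1$.

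There is no real obstacle. The only points needing care are the following.
\begin{itemize}
  \item The gap must be strictly positive so that a fixed $\delta/2$ margin exists. This is exactly why uniqueness is assumed; without it, ties could make the selector oscillate.
  \item Ties in the empirical $\arg\min$ are irrelevant, because on $A_N$ the inequality is strict.
\end{itemize}
If desired, the hypothesis $\widehat{q}_j\xrightarrow{p}q_j^\star$ can be traced back to component consistency. For each $j$, the map $(B,C)\mapsto\min_{\omega\in[0,1]}(B\omega^2+C\omega)$ is continuous: it is a minimum over a compact set of functions that are jointly continuous. Combined with Theorem~\ref{thm:primitive-consistency}, this gives the assumed convergence. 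The statement itself, however, takes the convergence as given.
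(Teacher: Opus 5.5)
Your proof is correct and is essentially the paper's argument: both fix the strictly positive oracle gap and show that, on the event where the maximal plug-in error is below a fixed fraction of it ($\delta/2$ for you, $g/3$ in the paper), the strict ordering forces $\widehat{j}=j^\star$; your union bound just makes the finite-$M$ step explicit. Your closing aside is not needed for the statement, but note that the algorithm's $\widehat{q}_j$ is evaluated at the $\rho_N$-stabilized projected ratio rather than at the exact minimizer of the plug-in quadratic, so tracing the hypothesis back to component consistency needs one small step beyond continuity of the minimum map.
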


\begin{proof}
Let
\[
  g=\min_{j\ne j^\star}(q_j^\star-q_{j^\star}^\star)>0.
\]
Since $M$ is finite and $\widehat{q}_j\xrightarrow{p}q_j^\star$ for each
$j$, we have
\[
  \max_{1\le j\le M}|\widehat{q}_j-q_j^\star|\xrightarrow{p}0.
\]
On the event where this maximum error is smaller than $g/3$,
\[
  \widehat{q}_{j^\star}
  <
  q_{j^\star}^\star+g/3
  <
  q_j^\star-g/3
  <
  \widehat{q}_j
\]
for every $j\ne j^\star$.  Thus $\widehat{j}=j^\star$ on an event whose
probability tends to one.
\end{proof}

\begin{corollary}[Finite-set oracle inequality]
\label{cor:finite-set-oracle-inequality}
In the setting of Theorem~\ref{thm:finite-kernel-selection}, let
\[
  j^\star\in\arg\min_{1\le j\le M} q_j^\star
\]
be any oracle minimizer and define
\[
  e_N=\max_{1\le j\le M}|\widehat{q}_j-q_j^\star|.
\]
Then the selected candidate satisfies the deterministic inequality
\[
  q_{\widehat{j}}^\star
  -
  q_{j^\star}^\star
  \le 2e_N.
\]
Consequently, if $e_N=O_p(r_N)$ for some deterministic $r_N\downarrow0$, then
\[
  q_{\widehat{j}}^\star
  -
  \min_{1\le j\le M}q_j^\star
  =
  O_p(r_N).
\]
\end{corollary}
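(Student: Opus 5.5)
The argument is the same three-term telescoping used in Theorem~\ref{thm:oracle-inequality}, applied to the finite index set instead of the weight interval. The selector $\widehat{j}$ minimizes $\widehat{q}_j$ over $\{1,\ldots,M\}$, with any deterministic tie-breaking. Its optimality gives $\widehat{q}_{\widehat{j}}\le\widehat{q}_{j^\star}$ for the fixed oracle minimizer $j^\star$, whichever one is chosen; uniqueness of the oracle minimizer is not needed here. I will then decompose
\[
  q_{\widehat{j}}^\star-q_{j^\star}^\star
  =
  \{q_{\widehat{j}}^\star-\widehat{q}_{\widehat{j}}\}
  +\{\widehat{q}_{\widehat{j}}-\widehat{q}_{j^\star}\}
  +\{\widehat{q}_{j^\star}-q_{j^\star}^\star\}.
\]
The middle term is nonpositive. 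Each outer term is bounded in absolute value by $e_N=\max_j|\widehat{q}_j-q_j^\star|$, since both $\widehat{j}$ and $j^\star$ lie in the index set over which the maximum is taken. This gives $q_{\widehat{j}}^\star-q_{j^\star}^\star\le 2e_N$ pointwise on the sample space, with no probabilistic input.

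For the stochastic consequence, $q_{j^\star}^\star=\min_j q_j^\star$ by the choice of $j^\star$, so the deterministic inequality gives $0\le q_{\widehat{j}}^\star-\min_j q_j^\star\le 2e_N$. Since $e_N=O_p(r_N)$, we get $2e_N=O_p(r_N)$. A nonnegative quantity dominated by an $O_p(r_N)$ quantity is itself $O_p(r_N)$, and this completes the proof.

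I see no real obstacle. The only point needing care is that $\widehat{j}$ is random, so the bound on $|q_{\widehat{j}}^\star-\widehat{q}_{\widehat{j}}|$ must go through the maximum over all candidates rather than any fixed-index convergence. This is exactly why $e_N$ is defined as a maximum. As a remark, finiteness of $M$ together with the assumed pointwise consistency $\widehat{q}_j\xrightarrow{p}q_j^\star$ gives $e_N\xrightarrow{p}0$, as in the proof of Theorem~\ref{thm:finite-kernel-selection}. Hence the selected candidate's oracle criterion is consistent even without a positive gap.
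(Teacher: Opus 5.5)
Your proposal is correct and follows the paper's proof essentially verbatim: the same add-and-subtract decomposition using $\widehat{q}_{\widehat{j}}\le\widehat{q}_{j^\star}$, with both outer terms bounded by the maximum $e_N$, and the stochastic rate read off directly from $e_N=O_p(r_N)$. Your additional remarks are accurate refinements of points the paper leaves implicit: the regret is nonnegative, uniqueness of the oracle minimizer is not needed, and the random index $\widehat{j}$ is handled through the maximum.
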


\begin{proof}
By definition of $\widehat{j}$,
\[
  \widehat{q}_{\widehat{j}}\le \widehat{q}_{j^\star}.
\]
Adding and subtracting the plug-in criteria gives
\[
\begin{aligned}
q_{\widehat{j}}^\star-q_{j^\star}^\star
&=
(q_{\widehat{j}}^\star-\widehat{q}_{\widehat{j}})
+(\widehat{q}_{\widehat{j}}-\widehat{q}_{j^\star})
+(\widehat{q}_{j^\star}-q_{j^\star}^\star)\\
&\le
|q_{\widehat{j}}^\star-\widehat{q}_{\widehat{j}}|
+|\widehat{q}_{j^\star}-q_{j^\star}^\star|
\le 2e_N.
\end{aligned}
\]
The stochastic statement follows immediately from the assumed rate for $e_N$.
\end{proof}

\begin{theorem}[Growing candidate dictionary]
\label{thm:growing-dictionary}
For each sample size, let
\[
  \mathcal{K}_N=\{K_{N,1},\ldots,K_{N,M_N}\}
\]
be a deterministic candidate dictionary.  Let $q_{N,j}^\star$ denote the
oracle minimized XMSE criterion for candidate $K_{N,j}$ and let
$\widehat{q}_{N,j}$ be its plug-in counterpart.
Define
\[
  \widehat{j}_N=\arg\min_{1\le j\le M_N}\widehat{q}_{N,j},
  \qquad
  j_N^\star\in\arg\min_{1\le j\le M_N}q_{N,j}^\star,
\]
and
\[
  e_N=\max_{1\le j\le M_N}
  |\widehat{q}_{N,j}-q_{N,j}^\star|.
\]
Then the selected candidate satisfies
\[
  q_{N,\widehat{j}_N}^\star
  -
  q_{N,j_N^\star}^\star
  \le 2e_N.
\]
If $e_N=o_p(1)$, the selected candidate is asymptotically oracle optimal over
the growing dictionary.  If, in addition, the oracle minimizer is unique with
gap
\[
  g_N
  =
  \min_{j\ne j_N^\star}
  (q_{N,j}^\star-q_{N,j_N^\star}^\star)>0
\]
and $e_N=o_p(g_N)$, then
\[
  \Pr(\widehat{j}_N=j_N^\star)\to1.
\]
\end{theorem}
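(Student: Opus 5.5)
The plan mirrors the proof of Corollary~\ref{cor:finite-set-oracle-inequality}, now with $M_N$ and all quantities indexed by $N$. The arguments are deterministic for each fixed $N$, so the growth of $M_N$ does not enter except through $e_N$.

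First I will prove the deterministic oracle inequality. By definition of $\widehat{j}_N$ as a minimizer of the plug-in criteria, $\widehat{q}_{N,\widehat{j}_N}\le\widehat{q}_{N,j_N^\star}$. I then write
\[
  q_{N,\widehat{j}_N}^\star-q_{N,j_N^\star}^\star
  =(q_{N,\widehat{j}_N}^\star-\widehat{q}_{N,\widehat{j}_N})
  +(\widehat{q}_{N,\widehat{j}_N}-\widehat{q}_{N,j_N^\star})
  +(\widehat{q}_{N,j_N^\star}-q_{N,j_N^\star}^\star).
\]
The middle term is nonpositive, and each outer term is bounded in absolute value by $e_N$, since $\widehat{j}_N$ and $j_N^\star$ both lie in $\{1,\ldots,M_N\}$. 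This gives the bound $2e_N$ surely.

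The second claim follows immediately. If $e_N=o_p(1)$, the nonnegative regret $q_{N,\widehat{j}_N}^\star-\min_j q_{N,j}^\star$ is at most $2e_N=o_p(1)$, which is what ``asymptotically oracle optimal'' means here.

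For selection consistency, I will work on the event $A_N=\{e_N<g_N/2\}$. On $A_N$, for every $j\ne j_N^\star$,
\[
  \widehat{q}_{N,j}>q_{N,j}^\star-g_N/2\ge q_{N,j_N^\star}^\star+g_N/2>\widehat{q}_{N,j_N^\star}.
\]
So $j_N^\star$ is the strict unique minimizer of the plug-in criteria, and $\widehat{j}_N=j_N^\star$ regardless of the tie-breaking rule. Since $g_N>0$ is deterministic, $e_N=o_p(g_N)$ means $e_N/g_N\xrightarrow{p}0$, hence $\Pr(A_N)=\Pr(e_N/g_N<1/2)\to1$.

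The only delicate point is this last step, because $g_N$ may shrink with $N$. The rate condition is formulated relative to $g_N$ rather than to a fixed gap $g/3$ as in Theorem~\ref{thm:finite-kernel-selection}, and dividing by $g_N$ handles this directly.
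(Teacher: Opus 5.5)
Your proposal is correct and follows essentially the same route as the paper: the same add-and-subtract decomposition gives the deterministic $2e_N$ bound, and the same gap argument gives exact selection. The only difference is that you work on the event $\{e_N<g_N/2\}$ where the paper uses $\{e_N<g_N/3\}$; both thresholds work, and your observation that $g_N$ is deterministic, so that $e_N=o_p(g_N)$ means $e_N/g_N\xrightarrow{p}0$, is exactly what makes that event's probability tend to one.
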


\begin{proof}
The oracle inequality is the same add-and-subtract argument used for the
finite-set oracle inequality:
\[
\begin{aligned}
q_{N,\widehat{j}_N}^\star-q_{N,j_N^\star}^\star
&=
(q_{N,\widehat{j}_N}^\star-\widehat{q}_{N,\widehat{j}_N})
+(\widehat{q}_{N,\widehat{j}_N}-\widehat{q}_{N,j_N^\star})
\\
&\quad
+(\widehat{q}_{N,j_N^\star}-q_{N,j_N^\star}^\star)\\
&\le
|q_{N,\widehat{j}_N}^\star-\widehat{q}_{N,\widehat{j}_N}|
+|\widehat{q}_{N,j_N^\star}-q_{N,j_N^\star}^\star|
\le 2e_N.
\end{aligned}
\]
Thus $e_N=o_p(1)$ implies asymptotic oracle optimality.  On the event
$e_N<g_N/3$, for every $j\ne j_N^\star$,
\[
  \widehat{q}_{N,j_N^\star}
  <
  q_{N,j_N^\star}^\star+g_N/3
  \le
  q_{N,j}^\star-g_N/3
  <
  \widehat{q}_{N,j}.
\]
Hence $\widehat{j}_N=j_N^\star$ on this event, and
$e_N=o_p(g_N)$ makes its probability tend to one.
\end{proof}

\begin{corollary}[High-probability dictionary oracle bound]
\label{cor:high-prob-dictionary}
In the setting of Theorem~\ref{thm:growing-dictionary}, suppose that for
deterministic sequences $a_N\downarrow0$ and $\beta_N\downarrow0$,
\[
  \Pr\{|\widehat{q}_{N,j}-q_{N,j}^\star|>a_N\}
  \le \beta_N,\qquad j=1,\ldots,M_N.
\]
Then, with probability at least $1-M_N\beta_N$,
\[
  q_{N,\widehat{j}_N}^\star
  -
  \min_{1\le j\le M_N}q_{N,j}^\star
  \le 2a_N.
\]
Consequently, if $a_N\to0$ and $M_N\beta_N\to0$, the selected dictionary
element is asymptotically oracle optimal.  If the oracle gap $g_N$ is
positive and $a_N<g_N/3$, then the same event implies exact oracle
selection, $\widehat{j}_N=j_N^\star$.
\end{corollary}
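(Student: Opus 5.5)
The plan is to reduce the statement to the deterministic oracle inequality of Theorem~\ref{thm:growing-dictionary}, using a union bound to control the uniform error $e_N$. Define the good event
\[
  \mathcal{G}_N=\bigcap_{j=1}^{M_N}\bigl\{|\widehat{q}_{N,j}-q_{N,j}^\star|\le a_N\bigr\}.
\]
Its complement is the union of the $M_N$ events $\{|\widehat{q}_{N,j}-q_{N,j}^\star|>a_N\}$. Each of these has probability at most $\beta_N$ by hypothesis, so subadditivity gives $\Pr(\mathcal{G}_N^c)\le M_N\beta_N$. Note that no independence across candidates is needed. On $\mathcal{G}_N$ we have $e_N\le a_N$. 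Theorem~\ref{thm:growing-dictionary} holds deterministically, realization by realization, so on $\mathcal{G}_N$ it yields
\[
  q^\star_{N,\widehat{j}_N}-\min_j q^\star_{N,j}\le 2e_N\le 2a_N .
\]
This is the first claim.

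For the asymptotic claim, assume $a_N\to0$ and $M_N\beta_N\to0$. Fix $\epsilon>0$ and take $N$ large enough that $2a_N<\epsilon$. Then the event that the regret exceeds $\epsilon$ is contained in $\mathcal{G}_N^c$, so its probability is at most $M_N\beta_N\to0$. Hence the regret is $o_p(1)$, which is asymptotic oracle optimality.

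For exact selection, assume $g_N>0$ and $a_N<g_N/3$, and work on $\mathcal{G}_N$. I will reuse the chain from the proof of Theorem~\ref{thm:growing-dictionary}, with $e_N$ replaced by $a_N$. For every $j\ne j_N^\star$,
\[
  \widehat{q}_{N,j_N^\star}\le q^\star_{N,j_N^\star}+a_N<q^\star_{N,j_N^\star}+g_N/3\le q^\star_{N,j}-2g_N/3<q^\star_{N,j}-a_N\le \widehat{q}_{N,j}.
\]
Here $a_N<g_N/3$ is what gives the two strict inequalities. It follows that $j_N^\star$ is the unique plug-in minimizer, so $\widehat{j}_N=j_N^\star$ on $\mathcal{G}_N$. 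That event has probability at least $1-M_N\beta_N$.

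The only delicate point is bookkeeping: the per-candidate bound is non-strict ($\le a_N$), while the gap condition is strict. The union bound step is routine.
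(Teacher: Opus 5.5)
Your proposal is correct and matches the paper's proof. In both, a union bound gives $\Pr(e_N>a_N)\le M_N\beta_N$, and on the complementary event the deterministic oracle inequality and gap argument of Theorem~\ref{thm:growing-dictionary} deliver the $2a_N$ regret bound and exact selection when $a_N<g_N/3$; your explicit inequality chain and the $o_p(1)$ step simply spell out details the paper leaves implicit.
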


\begin{proof}
By the union bound,
\[
  \Pr(e_N>a_N)
  \le
  \sum_{j=1}^{M_N}
  \Pr\{|\widehat{q}_{N,j}-q_{N,j}^\star|>a_N\}
  \le M_N\beta_N.
\]
On the complementary event, Theorem~\ref{thm:growing-dictionary} gives the
oracle inequality with $e_N\le a_N$.  If $a_N<g_N/3$, the gap argument in
the proof of Theorem~\ref{thm:growing-dictionary} gives
$\widehat{j}_N=j_N^\star$.
\end{proof}

\begin{corollary}[Sub-Gaussian dictionary rate]
\label{cor:subgaussian-dictionary-rate}
In the setting of Theorem~\ref{thm:growing-dictionary}, suppose that for some
scale $s_N\downarrow0$ and all $t>0$, $j=1,\ldots,M_N$,
\[
  \Pr\{|\widehat{q}_{N,j}-q_{N,j}^\star|>t\}
  \le
  2\exp\!\left(-\frac{t^2}{s_N^2}\right).
\]
Then, for any $\delta\in(0,1)$, with probability at least $1-\delta$,
\[
  q_{N,\widehat{j}_N}^\star
  -
  \min_{1\le j\le M_N}q_{N,j}^\star
  \le
  2s_N\sqrt{\log\!\left(\frac{2M_N}{\delta}\right)}.
\]
Consequently, if $s_N^2\log M_N\to0$, the selected dictionary element is
asymptotically oracle optimal.
\end{corollary}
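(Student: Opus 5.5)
The plan is to reduce the claim to Corollary~\ref{cor:high-prob-dictionary} by choosing the deviation level $a_N$ so that the union-bound failure probability $M_N\beta_N$ equals $\delta$. Fix $\delta\in(0,1)$ and set
\[
  a_N = s_N\sqrt{\log\!\left(\frac{2M_N}{\delta}\right)}.
\]
The sub-Gaussian tail assumption, applied with $t=a_N>0$ for each $j=1,\ldots,M_N$, gives
\[
  \Pr\{|\widehat{q}_{N,j}-q_{N,j}^\star|>a_N\}
  \le 2\exp\!\left(-\log\frac{2M_N}{\delta}\right)=\frac{\delta}{M_N}=:\beta_N .
\]
Corollary~\ref{cor:high-prob-dictionary} asks for the per-candidate tail bound at level $a_N$ with probability $\beta_N$. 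Its monotonicity requirements on $a_N$ and $\beta_N$ concern only the asymptotic conclusions, so I will not invoke the corollary as a black box. Instead I will repeat its two-line argument directly. The union bound gives $\Pr(e_N>a_N)\le M_N\cdot\delta/M_N=\delta$. On the complementary event $\{e_N\le a_N\}$, the deterministic oracle inequality of Theorem~\ref{thm:growing-dictionary} yields
\[
  q_{N,\widehat{j}_N}^\star-\min_j q_{N,j}^\star\le 2e_N\le 2a_N ,
\]
which is exactly the displayed bound, holding with probability at least $1-\delta$.

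For the asymptotic consequence, I need $q_{N,\widehat{j}_N}^\star-\min_j q_{N,j}^\star=o_p(1)$. Given $\epsilon>0$ and any fixed $\delta$, the bound reads
\[
  2s_N\sqrt{\log(2/\delta)+\log M_N}
  \le 2s_N\sqrt{\log(2/\delta)}+2\sqrt{s_N^2\log M_N}.
\]
Both terms tend to zero: the first because $s_N\downarrow0$, the second by the assumption $s_N^2\log M_N\to0$. So for all large $N$ the bound is below $\epsilon$, and the probability that the regret exceeds $\epsilon$ is at most $\delta$. Since $\delta$ is arbitrary, the regret converges to zero in probability.

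I do not expect a real obstacle here. The only points needing care are bookkeeping. First, $\log(2M_N/\delta)>0$ whenever $M_N\ge1$ and $\delta<1$, so the square root is well defined and $a_N>0$. Second, the tie-breaking in $\widehat{j}_N$ must be handled exactly as in Theorem~\ref{thm:growing-dictionary}, whose deterministic inequality holds for any minimizer of $\widehat{q}_{N,j}$.
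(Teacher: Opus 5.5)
Your proof is correct and follows the paper's argument: the same choice $a_N=s_N\sqrt{\log(2M_N/\delta)}$ and $\beta_N=\delta/M_N$, then the union bound, then the deterministic oracle inequality of Theorem~\ref{thm:growing-dictionary}. Repeating the union-bound step instead of citing Corollary~\ref{cor:high-prob-dictionary} is a sensible bit of extra care, since $\beta_N=\delta/M_N$ need not decrease to zero as that corollary nominally assumes, and your $\sqrt{x+y}\le\sqrt{x}+\sqrt{y}$ bound with arbitrary $\delta$ turns the paper's one-line $o(1)$ remark into an explicit $o_p(1)$ argument.
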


\begin{proof}
Apply Corollary~\ref{cor:high-prob-dictionary} with
\[
  a_N=s_N\sqrt{\log\!\left(\frac{2M_N}{\delta}\right)}
  \quad\text{and}\quad
  \beta_N=\delta/M_N.
\]
The stated tail bound gives the required per-candidate probability bound, and
the oracle inequality follows.  For fixed $\delta$, the right-hand side is
$o(1)$ whenever $s_N^2\log M_N\to0$.
\end{proof}

\begin{theorem}[Thresholded consistency at the zero-bias boundary]
\label{thm:threshold}
Let $C_{\mathrm{EB}}=V_{\mathrm{EB}}+H_{\mathrm{EB}}$ and
$\widehat{C}=\widehat{V}+\widehat{H}$.  Suppose
$B_{\mathrm{EB}}=0$, $C_{\mathrm{EB}}\ne0$,
$\widehat{C}\xrightarrow{p}C_{\mathrm{EB}}$, and for some deterministic
$\tau_N\downarrow0$,
\[
  \Pr(\widehat{B}\le\tau_N)\to1.
\]
Define the thresholded plug-in weight
\[
  \widetilde{\omega}
  =
  \begin{cases}
    1\{\widehat{C}<0\}, & \widehat{B}\le\tau_N,\\[3pt]
    \Pi_{[0,1]}\!\left(-\dfrac{\widehat{C}}{2\widehat{B}+\rho_N}\right),
    & \widehat{B}>\tau_N.
  \end{cases}
\]
Let
\[
  \omega_0^\star
  =
  \begin{cases}
    1, & C_{\mathrm{EB}}<0,\\
    0, & C_{\mathrm{EB}}>0,
  \end{cases}
\]
be the minimizer of the linear oracle objective
$q_0(\omega)=C_{\mathrm{EB}}\omega$ over $[0,1]$.  Then
\[
  \widetilde{\omega}\xrightarrow{p}\omega_0^\star.
\]
\end{theorem}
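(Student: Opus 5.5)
The plan is to show directly that $\Pr(\widetilde{\omega}=\omega_0^\star)\to1$. Convergence in probability follows from this, since $|\widetilde{\omega}-\omega_0^\star|>\varepsilon$ can only happen on the complement of that event. We work on the intersection of two events: the threshold branch is active, and the sign of $\widehat{C}$ matches the sign of $C_{\mathrm{EB}}$. We then check that both events have probability tending to one.

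\emph{Step 1 (threshold branch).} By hypothesis, the event $A_N=\{\widehat{B}\le\tau_N\}$ satisfies $\Pr(A_N)\to1$. On $A_N$ the thresholded weight reduces to the hard decision $\widetilde{\omega}=1\{\widehat{C}<0\}$. The projected-ratio branch, which may be unstable because $2\widehat{B}+\rho_N$ can be near zero, is therefore irrelevant on this event.

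\emph{Step 2 (sign consistency).} Let $c=|C_{\mathrm{EB}}|>0$, which is where $C_{\mathrm{EB}}\ne0$ is used. Since $\widehat{C}\xrightarrow{p}C_{\mathrm{EB}}$, the event $S_N=\{|\widehat{C}-C_{\mathrm{EB}}|<c/2\}$ satisfies $\Pr(S_N)\to1$. On $S_N$, $\widehat{C}$ has the same strict sign as $C_{\mathrm{EB}}$. Hence $1\{\widehat{C}<0\}=1\{C_{\mathrm{EB}}<0\}=\omega_0^\star$, which also confirms that $\omega_0^\star$ is the minimizer of the linear objective $q_0(\omega)=C_{\mathrm{EB}}\omega$ on $[0,1]$.

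\emph{Step 3 (combine).} On $A_N\cap S_N$ we have $\widetilde{\omega}=\omega_0^\star$. Moreover, $\Pr(A_N\cap S_N)\ge1-\Pr(A_N^c)-\Pr(S_N^c)\to1$. Therefore, for every $\varepsilon>0$, $\Pr(|\widetilde{\omega}-\omega_0^\star|>\varepsilon)\le\Pr((A_N\cap S_N)^c)\to0$.

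There is no real obstacle here. The only delicate point is excluding the knife-edge case $C_{\mathrm{EB}}=0$, where the indicator is discontinuous; the hypothesis $C_{\mathrm{EB}}\ne0$ provides exactly the margin $c/2$ needed. Note also that we never need $\widehat{B}\xrightarrow{p}0$ beyond the stated threshold condition, and no behaviour of $\rho_N$ is used.
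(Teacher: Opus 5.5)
Your proof is correct and follows the paper's argument essentially verbatim: restrict to the intersection of the threshold event $A_N=\{\widehat{B}\le\tau_N\}$ and the sign-consistency event, on which $\widetilde{\omega}=\omega_0^\star$, and note that both events have probability tending to one. Your explicit margin $|C_{\mathrm{EB}}|/2$ and the union bound spell out the step that the paper states as ``the sign of $\widehat{C}$ is eventually correct with probability tending to one.''
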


\begin{proof}
Let $A_N=\{\widehat{B}\le\tau_N\}$.  By assumption,
$\Pr(A_N)\to1$.  Since $\widehat{C}\xrightarrow{p}C_{\mathrm{EB}}$ and
$C_{\mathrm{EB}}\ne0$, the sign of $\widehat{C}$ is eventually correct with
probability tending to one:
\[
  \Pr\{1\{\widehat{C}<0\}=1\{C_{\mathrm{EB}}<0\}\}\to1.
\]
On $A_N$, the thresholded rule equals $1\{\widehat{C}<0\}$.  Therefore
\[
  \Pr(\widetilde{\omega}=\omega_0^\star)
  \ge
  \Pr\!\left(
    A_N\cap
    \{1\{\widehat{C}<0\}=1\{C_{\mathrm{EB}}<0\}\}
  \right)
  \to 1,
\]
which implies convergence in probability.
\end{proof}

\begin{remark}[Scope of the boundary result]
The thresholded theorem gives consistency of the selected boundary decision.
It does not claim a full XMSE expansion for the estimator with a random,
nonsmooth projected or thresholded weight.  Such an expansion would require
tracking additional terms from the weight map near the projection and threshold
boundaries.
\end{remark}

\section{Numerical Experiments}

The experiments are organized to mirror the theory.  The diagonal and
tail-mismatch simulations are controlled fixed-order FIR settings where the
XMSE expansion and plug-in component estimates are intended to be informative.
The sample-size and SNR sweeps check the asymptotic direction suggested by the
plug-in consistency and regret results.  The finite-candidate experiment tests
the risk-oriented kernel-selection criterion behind the finite-set oracle
inequality.  The Silverbox and Cascaded Tanks studies, by contrast, are
out-of-model public-data checks: they examine whether the same diagnostic
weight can retreat from harmful shrinkage on nonlinear systems and where its
finite-sample calibration limits appear, not whether the XMSE theorems hold
outside their assumptions.

\subsection{Common protocol}
\label{sec:protocol}

All synthetic experiments use the FIR model of the problem setup with known
noise variance $\sigma^2=1$; only the two public benchmarks estimate the
noise variance, as disclosed there.  For each system, the input is drawn
i.i.d.\ standard normal and the true parameter is constructed as follows and
then rescaled so that the sample signal-to-noise ratio
$\mathrm{SNR}=\overline{(\Phi\theta_0)^2}/\sigma^2$ matches its target.
Aligned and misaligned parameters are
\[
  \theta_0^{\mathrm{ali}}[k]=\sqrt{K[k,k]}\;z_k,
  \qquad
  \theta_0^{\mathrm{mis}}[k]=z_k/\sqrt{K[k,k]},
\]
with $z\sim\mathcal{N}(0,I_n)$, so that the parameter energy profile follows
or opposes the kernel diagonal.  The tail-mismatch parameters are
\[
  \theta_0^{\mathrm{tail}}[k]
  =
  r_k\,(-1)^{k+1}\,|z_k|,
  \qquad
  r_k=0.1+\frac{1.9\,(k-1)}{n-1},
\]
an increasing ramp with alternating signs, which concentrates energy in the
late impulse-response coefficients and is therefore intentionally misaligned
with the decaying TC/SS kernels.  The hyperparameter estimators are
implemented as described in Remark~\ref{rem:one-step}, the projected ratio
uses the numerical choices of Remark~\ref{rem:numerical-choices}, and
$\alpha=1$ throughout.

Two error metrics are reported.  For the synthetic experiments, the
parameter-space fit of an estimate $\widehat{\theta}$ of $\theta_0$ is
\[
  \mathrm{FIT}_\theta(\widehat{\theta})
  =
  100\left(1-
  \frac{\|\widehat{\theta}-\theta_0\|_2}
       {\|\theta_0-\bar{\theta}_0\mathbf{1}\|_2}\right),
  \qquad
  \bar{\theta}_0=\frac{1}{n}\sum_{k=1}^n\theta_0[k],
\]
averaged over Monte Carlo repetitions.  For the public benchmarks, the
output-prediction fit on a test record $y$ with prediction $\widehat{y}$ is
\[
  \mathrm{FIT}_y(\widehat{y})
  =
  100\left(1-
  \frac{\|y-\widehat{y}\|_2}{\|y-\bar{y}\mathbf{1}\|_2}\right).
\]
The two definitions agree in form but live in different spaces; tables state
which one is used.

\subsection{Diagonal-kernel calibration}

We first run a calibration experiment with $n=20$, $N=50$, SNR 10, 50
systems, and 200 Monte Carlo repetitions per system, using the RI kernel with
neutral parameters ($\theta_0=z$) and the diagonal-decay kernel
$K_{\mathrm{DI}}$ with aligned and misaligned parameters.
Table~\ref{tab:diag} reports the results.

\begin{table*}[t]
\centering
\small
\begin{tabular}{lccccccc}
\hline
Setting & ML & base & oracle mix & plug-in mix & $\hat\omega$ & base $<$ ML & mix $<$ ML \\
\hline
RI-EB-neutral & 0.898 & 0.818 & 0.811 & 0.830 & 0.824 & 96\% & 96\% \\
DIAG\_DECAY-EB-aligned & 0.798 & 0.720 & 0.716 & 0.730 & 0.798 & 94\% & 98\% \\
DIAG\_DECAY-EB-misaligned & 0.892 & 0.826 & 0.815 & 0.840 & 0.696 & 94\% & 98\% \\
\hline
\end{tabular}

\caption{Diagonal-kernel calibration experiment, $N=50$, $n=20$, SNR 10,
known $\sigma^2$.  Mean MSE across 50 systems; the last two columns give the
fraction of systems where the base and plug-in mixed estimators have lower
MSE than ML.}
\label{tab:diag}
\end{table*}

In these diagonal settings the regularized estimator helps on average in both
alignments, and the plug-in mixed estimator retains most of the benefit while
giving up a small part of it (e.g., $0.730$ versus $0.720$ for the aligned
diagonal kernel against $0.798$ for ML).  The misaligned construction reduces
but does not eliminate the benefit of shrinkage at this sample size, so this
experiment functions as a calibration check of the plug-in weight in benign
conditions; the genuinely harmful-regularization regime is produced by the
tail-mismatch construction studied next.

\subsection{Tail mismatch}

The main experiment considers TC and SS kernels with $\gamma=0.95$ and the
tail-mismatch parameter class of Section~\ref{sec:protocol}, whose energy is
concentrated in later impulse-response coefficients and which is therefore
intentionally misaligned with decaying kernels.  We use 100 systems and 500
Monte Carlo repetitions per system; the TC and SS rows use the same seed, and
since the tail construction does not depend on the kernel, the two kernels
see identical systems and noise realizations, so cross-kernel comparisons are
paired.  Table~\ref{tab:tail} reports mean MSEs.  The column ``base'' denotes
the regularized estimator using scaled EB, SURE-type, or finite-sample GCV
scale selection.

\begin{table*}[t]
\centering
\small
\begin{tabular}{lccccccc}
\hline
Setting & ML & base & oracle mix & plug-in mix & $\hat\omega$ & base $<$ ML & mix $<$ ML \\
\hline
TC-EB & 0.928 & 1.001 & 0.902 & 0.910 & 0.437 & 43\% & 67\% \\
TC-SURE & 0.928 & 0.912 & 0.908 & 0.911 & 0.602 & 68\% & 68\% \\
TC-GCV & 0.928 & 0.914 & 0.908 & 0.912 & 0.585 & 62\% & 73\% \\
SS-EB & 0.928 & 1.071 & 0.924 & 0.928 & 0.155 & 10\% & 42\% \\
SS-SURE & 0.928 & 0.929 & 0.926 & 0.927 & 0.419 & 49\% & 42\% \\
SS-GCV & 0.928 & 0.930 & 0.926 & 0.927 & 0.423 & 43\% & 57\% \\
\hline
\end{tabular}

\caption{Mean MSE under TC/SS tail mismatch, $N=50$, $n=20$, SNR 10, known
$\sigma^2$.  The last two columns give the fraction of systems where the base
and plug-in mixed estimators have lower MSE than ML.  TC and SS rows share
the same systems and noise realizations.}
\label{tab:tail}
\end{table*}

\begin{table*}[t]
\centering
\small
\begin{tabular}{lcccccc}
\hline
Setting & ML & base & plug-in mix & SURE-$\omega$ mix & hard $\{0,1\}$ & corrected mix \\
\hline
TC-EB & 0.928 & 1.001 & 0.910 & 0.911 & 0.914 & 0.910 \\
TC-SURE & 0.928 & 0.912 & 0.911 & 0.911 & 0.912 & 0.911 \\
TC-GCV & 0.928 & 0.914 & 0.912 & 0.911 & 0.913 & 0.911 \\
SS-EB & 0.928 & 1.071 & 0.928 & 0.931 & 0.931 & 0.928 \\
SS-SURE & 0.928 & 0.929 & 0.927 & 0.929 & 0.927 & 0.927 \\
SS-GCV & 0.928 & 0.930 & 0.927 & 0.929 & 0.927 & 0.927 \\
\hline
\end{tabular}

\caption{Baseline comparison under TC/SS tail mismatch (mean MSE).  SURE-$\omega$
tunes the mixing weight by minimizing the fixed-hyperparameter SURE of
$\mathrm{MSE}(\thetamix(\omega))-\mathrm{MSE}(\thetaml)$; hard $\{0,1\}$
selects ML or EB by the sign of the plug-in XMSE; corrected mix uses the
trace-corrected components of Remark~\ref{rem:corrected}.}
\label{tab:tail-baselines}
\end{table*}

\begin{table*}[t]
\centering
\small
\begin{tabular}{lcccc}
\hline
Setting & base--ML & mix--ML & mix--base & mix $<$ base \\
\hline
TC-EB & 0.0726 (0.0216) & -0.0187 (0.0039) & -0.0913 (0.0202) & 61\% \\
TC-SURE & -0.0167 (0.0040) & -0.0175 (0.0036) & -0.0008 (0.0009) & 38\% \\
TC-GCV & -0.0142 (0.0041) & -0.0167 (0.0034) & -0.0026 (0.0012) & 53\% \\
SS-EB & 0.1425 (0.0232) & -0.0006 (0.0005) & -0.1431 (0.0231) & 90\% \\
SS-SURE & 0.0007 (0.0009) & -0.0015 (0.0005) & -0.0022 (0.0007) & 53\% \\
SS-GCV & 0.0019 (0.0011) & -0.0014 (0.0004) & -0.0033 (0.0009) & 62\% \\
\hline
\end{tabular}

\caption{Mean MSE gaps with standard errors in parentheses under TC/SS tail
mismatch.  Negative values indicate that the estimator named before the dash
has lower MSE.  The per-system winning fractions against ML appear in
Table~\ref{tab:tail}.}
\label{tab:tail-gaps}
\end{table*}

\begin{table*}[t]
\centering
\begin{tabular}{lcccc}
\hline
Setting & ML & base & oracle mix & plug-in mix \\
\hline
TC-EB & 75.95 & 75.52 & 76.34 & 76.24 \\
TC-SURE & 75.95 & 76.23 & 76.26 & 76.22 \\
TC-GCV & 75.95 & 76.20 & 76.26 & 76.21 \\
SS-EB & 75.95 & 74.68 & 76.02 & 75.96 \\
SS-SURE & 75.95 & 75.96 & 75.99 & 75.97 \\
SS-GCV & 75.95 & 75.94 & 75.99 & 75.97 \\
\hline
\end{tabular}

\caption{Average parameter-space $\mathrm{FIT}_\theta$ under TC/SS tail
mismatch, $N=50$, $n=20$, SNR 10.}
\label{tab:tail-fit}
\end{table*}

\begin{table*}[t]
\centering
\begingroup
\scriptsize
\setlength{\tabcolsep}{2pt}
\begin{tabular}{lccccc}
\hline
Setting & $\hat\omega$ & ML MSE & base MSE & plug-in MSE & plug-in FIT \\
\hline
TC-EB & 0.366 [0.111, 0.804] & 0.84 [0.71, 1.05] & 0.87 [0.76, 1.12] & 0.82 [0.71, 1.03] & 77.43 [72.82, 79.69] \\
TC-SURE & 0.658 [0.199, 0.993] & 0.84 [0.71, 1.05] & 0.82 [0.71, 1.02] & 0.82 [0.71, 1.03] & 77.46 [72.80, 79.69] \\
TC-GCV & 0.639 [0.206, 0.937] & 0.84 [0.71, 1.05] & 0.82 [0.71, 1.03] & 0.82 [0.71, 1.03] & 77.45 [72.78, 79.69] \\
SS-EB & 0.112 [0.013, 0.228] & 0.84 [0.71, 1.05] & 0.94 [0.80, 1.22] & 0.84 [0.71, 1.05] & 77.29 [72.47, 79.65] \\
SS-SURE & 0.391 [0.000, 0.758] & 0.84 [0.71, 1.05] & 0.84 [0.72, 1.05] & 0.84 [0.71, 1.05] & 77.29 [72.52, 79.65] \\
SS-GCV & 0.408 [0.057, 0.707] & 0.84 [0.71, 1.05] & 0.84 [0.72, 1.06] & 0.84 [0.71, 1.05] & 77.29 [72.51, 79.65] \\
\hline
\end{tabular}

\endgroup
\caption{Median and interquartile ranges across systems under TC/SS tail
mismatch.}
\label{tab:tail-quantiles}
\end{table*}

\begin{table*}[t]
\centering
\small
\begin{tabular}{lcccccc}
\hline
Setting & base $\Delta$MSE & oracle XMSE/$N^2$ & sign & mix $\Delta$MSE & plug-in XMSE/$N^2$ & sign \\
\hline
TC-EB & 0.0726 & 0.1278 & 95\% & -0.0187 & -0.0275 & 70\% \\
TC-SURE & -0.0167 & -0.0174 & 91\% & -0.0175 & -0.0212 & 78\% \\
TC-GCV & -0.0142 & -0.0174 & 97\% & -0.0167 & -0.0200 & 73\% \\
SS-EB & 0.1425 & 0.2294 & 96\% & -0.0006 & -0.0041 & 59\% \\
SS-SURE & 0.0007 & 0.0018 & 88\% & -0.0015 & -0.0024 & 68\% \\
SS-GCV & 0.0019 & 0.0018 & 90\% & -0.0014 & -0.0023 & 57\% \\
\hline
\end{tabular}

\caption{Sample MSE gaps and XMSE-based diagnostics under TC/SS tail
mismatch.  The base diagnostic is the \emph{oracle} XMSE evaluated at
$\theta_0$, whereas the mix diagnostic is the \emph{plug-in} XMSE evaluated
at $\thetaml$ including the data-dependent weight; the sign columns give the
fraction of systems where the diagnostic sign matches the sample MSE-gap
sign.  These diagnostics implement, at the level of our mixed rule, the
finite-sample XMSE approximation idea of Section~IV of \cite{ju2026xmse}.}
\label{tab:tail-xmse}
\end{table*}

The scaled EB rows show the clearest protection effect.  With the TC kernel
the base estimator is $8\%$ worse than ML on average ($1.001$ versus
$0.928$) while the plug-in mixed estimator is $2\%$ better ($0.910$); with
the ill-conditioned SS kernel the base estimator is $15\%$ worse ($1.071$)
and the plug-in mixed estimator retreats essentially to ML ($0.928$, mean
weight $0.155$).  SURE-type and GCV scale selection are intrinsically more
conservative in these settings, with base estimators close to or slightly
better than ML, and the mixed estimator matches or slightly improves them.
In terms of mean MSE the plug-in mixed estimator is at or below ML in all six
settings (Table~\ref{tab:tail-gaps}); in terms of the per-system winning
fraction against ML, mixing improves on the base estimator in four settings,
ties in one (TC-SURE, $68\%$), and is lower in one (SS-SURE, $49\%$ versus
$42\%$), the setting where the base estimator is already statistically
indistinguishable from ML.  Both fractions are reported in
Table~\ref{tab:tail} so this statement can be checked directly.  The FIT
values in Table~\ref{tab:tail-fit} tell the same story in the relative
metric.  Table~\ref{tab:tail-quantiles} shows
that the plug-in rule assigns smaller weights to the EB estimator under the
more vulnerable EB hyperparameter choice (median $0.37$ for TC, $0.11$ for
SS), while SURE and GCV receive larger weights because their base estimates
are already more robust.
Table~\ref{tab:tail-xmse} connects the empirical risk gaps to the XMSE
criterion: for the base estimators, the sign of the oracle XMSE agrees
with the sign of the observed sample MSE gap in 88--97\% of systems.  After
mixing, the average sample gap is negative in all six settings, although the
plug-in XMSE sign is naturally noisier because it also contains the random
weight-selection step.

Table~\ref{tab:tail-baselines} adds the natural competitors.  The SURE-tuned
weight, which minimizes an unbiased estimate of the finite-sample risk
difference for the frozen hyperparameter, performs essentially identically to
the XMSE plug-in weight under mismatch (within $0.003$ in every row) while
choosing systematically larger weights; its advantage appears in favorable
settings (Section~\ref{sec:semireal}).  The hard $\{0,1\}$ rule is slightly
worse than the soft mixture in the four rows where the two differ at the
displayed precision and ties it in the remaining two (SS-SURE and SS-GCV),
which quantifies the small gain of weight interpolation over pure ML/EB
selection.  The trace-corrected variant coincides with the plug-in rule to the
displayed precision in every row except TC-GCV, where the two differ by
$0.001$, because with known $\sigma^2$, $n=20$, and $N=50$ the trace
terms are small relative to the tail-mismatch quadratic forms.  Concerning
the projection of the weight (Remark~\ref{rem:projection}), the unconstrained
ratio fell below $0$ in $8$--$31\%$ and exceeded $1$ in $0$--$34\%$ of
realizations across the six settings, so the projection is active for a
nontrivial minority of realizations.

\subsection{Sample-size and SNR sensitivity}

To check whether the plug-in mixed estimator approaches the oracle mixed
estimator as the sample size increases, we run a sample-size sweep for the TC
tail-mismatch setting with scaled EB hyperparameter selection, using 80 systems
and 300 Monte Carlo repetitions per system.  Table~\ref{tab:n-sweep} shows that
the plug-in mixed estimator moves closer to the oracle mixed estimator as $N$
increases, matching it to three digits from $N=100$ on.  At $N=30$ the noise
level is high enough that shrinkage helps on average; the plug-in mixed
estimator improves over both ML and the base estimator, but remains visibly
away from the oracle ($4.79$ versus $4.31$), which reflects the noisier
finite-sample XMSE approximation at small $N$.

\begin{table*}[t]
\centering
\begin{tabular}{lcccccc}
\hline
$N$ & ML & base & oracle mix & plug-in mix & $\hat\omega$ & mix $<$ ML \\
\hline
30 & 5.128 & 5.044 & 4.310 & 4.792 & 0.418 & 70\% \\
50 & 0.918 & 1.010 & 0.900 & 0.908 & 0.368 & 69\% \\
70 & 0.485 & 0.509 & 0.479 & 0.480 & 0.344 & 71\% \\
100 & 0.282 & 0.286 & 0.281 & 0.281 & 0.376 & 84\% \\
150 & 0.162 & 0.163 & 0.161 & 0.161 & 0.344 & 89\% \\
\hline
\end{tabular}

\caption{Sample-size sensitivity under TC tail mismatch, scaled EB, SNR 10.}
\label{tab:n-sweep}
\end{table*}

We also sweep SNR at $N=50$, again using 80 systems and 300 Monte Carlo
repetitions per system.  Table~\ref{tab:snr-sweep} shows that the plug-in mixed
estimator improves over ML across all tested SNR levels.  At very low SNR the
base EB estimator is already helpful, while at moderate and high SNR the
XMSE-aware mixture provides protection against the tail-mismatch penalty.

\begin{table*}[t]
\centering
\begin{tabular}{lcccccc}
\hline
SNR & ML & base & oracle mix & plug-in mix & $\hat\omega$ & mix $<$ ML \\
\hline
1 & 0.854 & 0.762 & 0.729 & 0.766 & 0.394 & 98\% \\
3 & 0.857 & 0.881 & 0.792 & 0.811 & 0.426 & 86\% \\
10 & 0.842 & 0.879 & 0.816 & 0.824 & 0.438 & 72\% \\
30 & 0.854 & 0.875 & 0.845 & 0.848 & 0.413 & 62\% \\
100 & 0.853 & 0.859 & 0.850 & 0.851 & 0.458 & 69\% \\
\hline
\end{tabular}

\caption{SNR sensitivity under TC tail mismatch, scaled EB, $N=50$.}
\label{tab:snr-sweep}
\end{table*}

Figure~\ref{fig:sensitivity} visualizes the same sensitivity experiments.
Both panels plot the mean MSE of each estimator \emph{relative to ML}, so the
ML reference is the horizontal line at one and the curves remain separated at
every $N$ and SNR; absolute MSE values are in
Tables~\ref{tab:n-sweep}--\ref{tab:snr-sweep}.  The sample-size panel shows
the plug-in mixed estimator tracking the oracle mixed estimator as $N$ grows,
while the SNR panel shows that the mixed estimator stays at or below the ML
line across the tested signal-to-noise ratios.

\begin{figure*}[t]
\centering
\includegraphics[width=\textwidth]{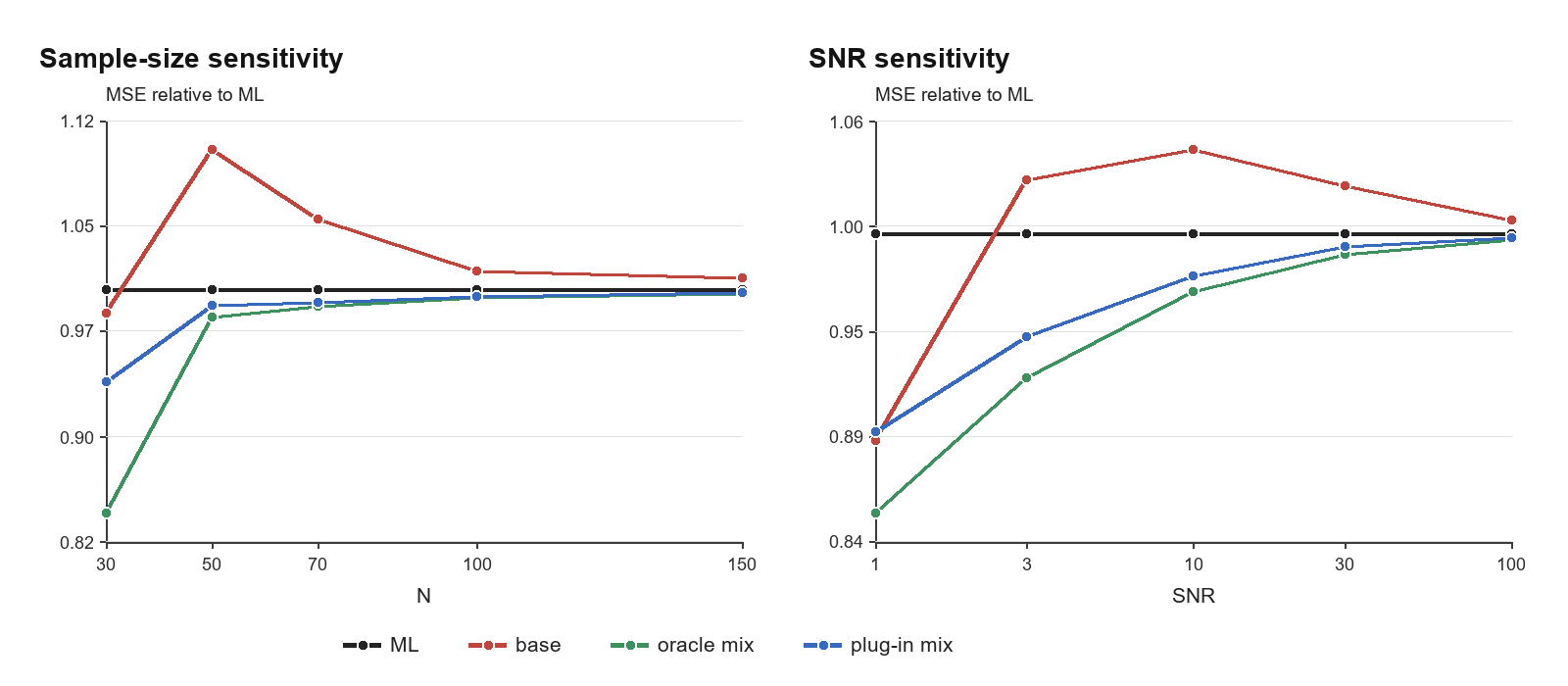}
\caption{Sample-size and SNR sensitivity under TC tail mismatch with scaled EB
hyperparameter selection.  Both panels show mean MSE relative to ML.}
\label{fig:sensitivity}
\end{figure*}

\subsection{Finite-candidate kernel selection}

We next evaluate the finite-candidate kernel-selection rule using the
candidate set $\{\mathrm{RI},\mathrm{TC},\mathrm{SS}\}$ over 100 systems and
300 Monte Carlo repetitions per system.  The protocol is the one a user
faces: for \emph{each individual noise realization}, the rule computes the
single-realization plug-in criterion $\widehat{q}_j$ for every candidate and
selects the minimizing kernel, and the selected mixed estimator is scored on
that same realization.  This matches
Theorem~\ref{thm:finite-kernel-selection}, whose plug-in criterion is a
single-realization quantity; averaging $\widehat{q}_j$ over the Monte Carlo
repetitions before selecting would use information a single user does not
have.  The selection is
risk-oriented: it minimizes the plug-in XMSE criterion and is not intended to
recover the data-generating kernel.  As a standard-practice baseline, we also
report per-realization kernel selection by maximized marginal likelihood
(evidence), where for each candidate the scale $\eta$ is chosen by maximizing
the Gaussian evidence and the winning kernel's regularized estimator is
scored (column ``evid.\ MSE'').

Table~\ref{tab:kernel-selection} reports the realization-level selection
frequencies, the fraction of realizations where the selected kernel matches
the per-system best fixed candidate (``match best''), the realization-level
MSE of the selected estimator, the best fixed-candidate MSE, and the regret,
i.e., the mean excess of the selected MSE over the best fixed candidate.
The regret column is the empirical analogue of
Corollary~\ref{cor:finite-set-oracle-inequality}: the theory controls the oracle
criterion regret of the selected candidate by the uniform plug-in criterion
error, while the table reports the same risk-oriented comparison on realized
Monte Carlo test errors.  Thus a small regret is the target predicted by the
oracle inequality even when the selected kernel is not the data-generating
kernel.

\begin{table*}[t]
\centering
\small
\setlength{\tabcolsep}{3pt}
\begin{tabular}{lccccccccc}
\hline
Setting & RI & TC & SS & match best & selected MSE & best MSE & regret & evid.\ MSE & ML MSE \\
\hline
tc\_aligned & 68\% & 15\% & 17\% & 74\% & 0.768 & 0.755 & 0.013 & 0.746 & 0.824 \\
tc\_tail & 99\% & 1\% & 0\% & 90\% & 0.839 & 0.837 & 0.002 & 0.869 & 0.890 \\
\hline
\end{tabular}

\caption{Finite-candidate kernel selection with per-realization selection.
RI/TC/SS columns are realization-level selection frequencies of the plug-in
XMSE rule; ``evid.\ MSE'' is the per-realization marginal-likelihood
baseline, which selects RI in $99$--$100\%$ of realizations in both
settings.}
\label{tab:kernel-selection}
\end{table*}

In the tail-mismatch setting, the XMSE rule chooses RI in 99\% of
realizations, effectively avoiding the decaying TC/SS kernels that are poorly
aligned with the true parameter shape; the selected MSE ($0.839$) is within
$0.002$ of the best fixed candidate and improves over both ML ($0.890$) and
the evidence baseline ($0.869$), whose selected EB estimator inherits the
harmful shrinkage.  In the aligned setting, the realization-level selection is
noisier (it
distributes $68/15/17\%$ across RI/TC/SS and matches the best fixed candidate
in $74\%$ of realizations), but the regret remains small ($0.013$), which is
the risk-oriented target.  In this favorable setting the evidence baseline is
slightly better than the XMSE rule ($0.746$ versus $0.768$): when the kernel
family is well aligned, standard evidence maximization combined with full EB
shrinkage is hard to beat, and the value of the XMSE rule lies in the
mismatched regime.

\subsection{Semi-real fixed impulse-response benchmark}
\label{sec:semireal}

As a complementary check, we use a fixed stable oscillatory impulse response
rather than drawing a new parameter vector for each system.  The benchmark uses
$n=30$, $N=80$, SNR 10, 80 input realizations, and 300 Monte Carlo repetitions
per input; TC and SS rows share the same seed and therefore the same inputs
and noise (paired).  The impulse response is normalized and then rescaled to
the desired SNR for each input realization.  Table~\ref{tab:semireal} reports
the results for TC and SS kernels with EB, SURE-type, and finite-sample GCV
scale selection, and Table~\ref{tab:semireal-baselines} the corresponding
baseline comparison.

\begin{table*}[t]
\centering
\small
\begin{tabular}{lccccccc}
\hline
Setting & ML & base & oracle mix & plug-in mix & $\hat\omega$ & base $<$ ML & mix $<$ ML \\
\hline
TC-EB-semireal & 0.766 & 0.443 & 0.443 & 0.546 & 0.711 & 100\% & 100\% \\
TC-SURE-semireal & 0.766 & 0.474 & 0.488 & 0.565 & 0.631 & 100\% & 100\% \\
TC-GCV-semireal & 0.766 & 0.400 & 0.409 & 0.674 & 0.263 & 100\% & 100\% \\
SS-EB-semireal & 0.766 & 0.451 & 0.451 & 0.699 & 0.169 & 100\% & 100\% \\
SS-SURE-semireal & 0.766 & 0.586 & 0.601 & 0.677 & 0.427 & 100\% & 100\% \\
SS-GCV-semireal & 0.766 & 0.313 & 0.334 & 0.765 & 0.004 & 100\% & 99\% \\
\hline
\end{tabular}

\caption{Semi-real fixed impulse-response benchmark, $N=80$, $n=30$, SNR 10,
known $\sigma^2$.}
\label{tab:semireal}
\end{table*}

\begin{table*}[t]
\centering
\small
\begin{tabular}{lcccccc}
\hline
Setting & ML & base & plug-in mix & SURE-$\omega$ mix & hard $\{0,1\}$ & corrected mix \\
\hline
TC-EB-semireal & 0.766 & 0.443 & 0.546 & 0.448 & 0.569 & 0.495 \\
TC-SURE-semireal & 0.766 & 0.474 & 0.565 & 0.474 & 0.563 & 0.544 \\
TC-GCV-semireal & 0.766 & 0.400 & 0.674 & 0.414 & 0.719 & 0.487 \\
SS-EB-semireal & 0.766 & 0.451 & 0.699 & 0.463 & 0.755 & 0.567 \\
SS-SURE-semireal & 0.766 & 0.586 & 0.677 & 0.586 & 0.693 & 0.631 \\
SS-GCV-semireal & 0.766 & 0.313 & 0.765 & 0.333 & 0.765 & 0.438 \\
\hline
\end{tabular}

\caption{Baseline comparison on the semi-real benchmark (mean MSE); columns
as in Table~\ref{tab:tail-baselines}.}
\label{tab:semireal-baselines}
\end{table*}

In this benchmark the decaying kernels are well matched to the stable impulse
response, so the base regularized estimator is very strong (mean MSE
$0.31$--$0.59$ against $0.77$ for ML).  The plug-in mixed estimator is never
worse than ML, but it is \emph{overly conservative} here: it retains only part
of the achievable gain, most visibly in the GCV and SS rows (e.g., SS-GCV
keeps almost none of it, with mean weight $0.004$).  The mechanism is the
finite-sample inflation of the plug-in squared-bias component discussed in
Remark~\ref{rem:corrected}: quadratic forms in $\thetaml$ involving the
ill-conditioned $Q=K_{\mathrm{SS}}^{-1}$ are dominated by their trace bias,
which makes shrinkage look more harmful than it is.  Consistent with this
diagnosis, the trace-corrected variant recovers a large part of the gap
(TC-GCV $0.674\to0.487$, SS-GCV $0.765\to0.438$), and the SURE-tuned weight,
which estimates the finite-sample risk difference directly, recovers nearly
all of it (within $0.020$ of the base in every row).  The unconstrained
oracle ratio exceeds one in these settings (mean values between $1.1$ and
$27$), i.e., over-shrinkage beyond EB would be profitable, so the projected
oracle weight sits at or near the upper boundary.

Two readings of Table~\ref{tab:semireal} deserve comment.  First, the oracle
mixed estimator is slightly \emph{worse} than the base estimator in the
SURE/GCV rows (e.g., $0.488$ versus $0.474$ for TC-SURE).  This is not a
contradiction of Theorem~\ref{thm:oracle-weight}: the oracle weight minimizes
the limiting XMSE criterion, and its dominance over the base estimator holds
at the XMSE scale asymptotically, not at every finite $N$; in strongly
favorable settings the finite-sample optimal weight can even exceed one,
which the criterion, restricted to $[0,1]$, cannot express.  Second, the
favorable-case conservativeness shown here is the price of the protection
demonstrated in the tail-mismatch experiment; the baseline table makes the
trade-off explicit rather than hiding it.  This supports the intended
interpretation of the method as a protection mechanism with a quantified
cost: it retreats toward ML under harmful regularization, and in favorable
settings the trace-corrected or SURE-weighted variants recover most of the
remaining benefit.

The next two public benchmarks are deliberately interpreted differently from
the synthetic experiments.  They use the same linear FIR estimator on nonlinear
input-output systems, so they are out-of-model robustness checks rather than
tests of state-of-the-art nonlinear identification accuracy.  Two protocol
details apply to both benchmarks and differ from the synthetic experiments.
First, the noise variance is not known: it is estimated from the ML residuals
as $\widehat{\sigma}^2=\|y-\Phi\thetaml\|_2^2/(N-n)$ on the training segment,
and this estimate enters the regularized estimator, the plug-in components,
and the trace correction.  Because the linear FIR model is misspecified for
these nonlinear systems, $\widehat{\sigma}^2$ absorbs unmodeled dynamics in
addition to measurement noise.  Second, the FIR regression matrix is built
from the zero-initialized input sequence, so the first $n$ training rows
contain the initial transient; the test-side predictions discard a 50-sample
initialization window, while the training side does not.  Both choices affect
all estimators identically, and we disclose them for reproducibility.  In the
tables, the mixing weight is a useful diagnostic: weights near zero indicate
that the plug-in XMSE criterion detects harmful shrinkage and returns ML,
whereas weights near one in rows where EB is worse than ML indicate
finite-sample calibration error in the estimated XMSE components.  The
trace-corrected weight $\widehat{\omega}_{\mathrm{c}}$ and the corresponding
RMSE/FIT columns test whether that calibration error is explained by the
trace bias of Remark~\ref{rem:corrected}.

\subsection{Public Silverbox benchmark}

As an archival real-data check, we use the Silverbox benchmark
\cite{wigren2013silverbox}, an electronic Duffing-oscillator system with an
official train/test split.  Because the benchmark is nonlinear, the purpose is
not to obtain a state-of-the-art Silverbox model with a linear FIR estimator.
Instead, we ask whether the XMSE-aware rule retreats from harmful shrinkage,
and where its plug-in calibration limits appear, on a public input-output
dataset outside the synthetic data-generating assumptions.  We fit a
length-$50$ FIR model using the first 500 samples of the official training
record and evaluate one-step FIR predictions on the three official test
records (\texttt{multisine}, \texttt{arrow\_full}, and
\texttt{arrow\_no\_extrap}) after a 50-sample initialization window.  Inputs
and outputs are standardized using the training record.

\begin{table*}[t]
\centering
\scriptsize
\resizebox{\textwidth}{!}{\begin{tabular}{l l l r r r r r r r r r r}
\hline
test & kernel & hpe & $\widehat{\omega}$ & $\widehat{\omega}_{\mathrm{c}}$ & RMSE ML & RMSE EB & RMSE mix & RMSE corr & FIT ML & FIT EB & FIT mix & FIT corr \\
\hline
multisine & TC & EB & 0.219 & 1.000 & 0.2198 & 0.2230 & 0.2203 & 0.2230 & 77.5 & 77.1 & 77.4 & 77.1 \\
multisine & TC & SURE & 0.945 & 1.000 & 0.2198 & 0.2216 & 0.2215 & 0.2216 & 77.5 & 77.3 & 77.3 & 77.3 \\
multisine & TC & GCV & 0.000 & 1.000 & 0.2198 & 0.2234 & 0.2198 & 0.2234 & 77.5 & 77.1 & 77.5 & 77.1 \\
multisine & SS & EB & 0.044 & 1.000 & 0.2198 & 0.2229 & 0.2199 & 0.2229 & 77.5 & 77.2 & 77.5 & 77.2 \\
multisine & SS & SURE & 0.352 & 1.000 & 0.2198 & 0.2208 & 0.2202 & 0.2208 & 77.5 & 77.4 & 77.4 & 77.4 \\
multisine & SS & GCV & 0.000 & 1.000 & 0.2198 & 0.2257 & 0.2198 & 0.2257 & 77.5 & 76.9 & 77.5 & 76.9 \\
arrow\_full & TC & EB & 0.219 & 1.000 & 0.2955 & 0.2968 & 0.2954 & 0.2968 & 69.3 & 69.1 & 69.3 & 69.1 \\
arrow\_full & TC & SURE & 0.945 & 1.000 & 0.2955 & 0.2963 & 0.2962 & 0.2963 & 69.3 & 69.2 & 69.2 & 69.2 \\
arrow\_full & TC & GCV & 0.000 & 1.000 & 0.2955 & 0.2952 & 0.2955 & 0.2952 & 69.3 & 69.3 & 69.3 & 69.3 \\
arrow\_full & SS & EB & 0.044 & 1.000 & 0.2955 & 0.2971 & 0.2955 & 0.2971 & 69.3 & 69.1 & 69.3 & 69.1 \\
arrow\_full & SS & SURE & 0.352 & 1.000 & 0.2955 & 0.2961 & 0.2956 & 0.2961 & 69.3 & 69.2 & 69.2 & 69.2 \\
arrow\_full & SS & GCV & 0.000 & 1.000 & 0.2955 & 0.2972 & 0.2955 & 0.2972 & 69.3 & 69.1 & 69.3 & 69.1 \\
arrow\_no\_extrap & TC & EB & 0.219 & 1.000 & 0.1815 & 0.1828 & 0.1814 & 0.1828 & 76.5 & 76.4 & 76.5 & 76.4 \\
arrow\_no\_extrap & TC & SURE & 0.945 & 1.000 & 0.1815 & 0.1822 & 0.1822 & 0.1822 & 76.5 & 76.4 & 76.4 & 76.4 \\
arrow\_no\_extrap & TC & GCV & 0.000 & 1.000 & 0.1815 & 0.1826 & 0.1815 & 0.1826 & 76.5 & 76.4 & 76.5 & 76.4 \\
arrow\_no\_extrap & SS & EB & 0.044 & 1.000 & 0.1815 & 0.1830 & 0.1815 & 0.1830 & 76.5 & 76.3 & 76.5 & 76.3 \\
arrow\_no\_extrap & SS & SURE & 0.352 & 1.000 & 0.1815 & 0.1821 & 0.1817 & 0.1821 & 76.5 & 76.4 & 76.5 & 76.4 \\
arrow\_no\_extrap & SS & GCV & 0.000 & 1.000 & 0.1815 & 0.1842 & 0.1815 & 0.1842 & 76.5 & 76.2 & 76.5 & 76.2 \\
\hline
\end{tabular}
}
\caption{Public Silverbox benchmark with a length-$50$ FIR model trained on
the first 500 samples of the official training record.  RMSE and
$\mathrm{FIT}_y$ are computed on standardized official test records after a
50-sample initialization window.  The noise variance is estimated from ML
training residuals.  $\widehat{\omega}$ is the plug-in weight and
$\widehat{\omega}_{\mathrm{c}}$, RMSE corr, FIT corr the trace-corrected
variant.}
\label{tab:silverbox}
\end{table*}

Table~\ref{tab:silverbox} gives both a conservative-success case and a visible
calibration failure mode.  The weight is selected once on the training data
and reused across the three test records.  When the training-side plug-in
criterion flags the finite-sample GCV regularized fit as harmful, the
XMSE-aware weight is zero and the mixed estimator returns ML exactly on every
test record.  In the EB rows, the rule uses a small amount of shrinkage and is
indistinguishable from ML at the displayed precision on the arrow tests
(differences in the fourth decimal of RMSE on a single test record, for which
no uncertainty quantification is available), while avoiding the larger
degradation of the base EB estimator.  We therefore do not claim improvement
over ML on this benchmark; the supported claim is that the safeguard avoids
most of the base estimator's degradation.  The SURE-type rows are the failure
mode: the plug-in criterion assigns a large weight (e.g., $0.945$ for TC) and
inherits most of the base estimator's error.  The trace-corrected variant
does not repair this; instead, $\widehat{\omega}_{\mathrm{c}}=1$ in every
row, because the misspecification-inflated $\widehat{\sigma}^2$ makes the
subtracted traces as large as the plug-in quadratic forms, so the corrected
squared-bias component collapses to its zero floor and the corrected rule
returns full EB.  The benchmark calibration failure is thus not explained by
the trace bias alone: under model misspecification, both the uncorrected and
the corrected plug-in components are operating outside the theory, and the
mixed rule degrades gracefully to one of its endpoints rather than failing
unboundedly.  This is the precise sense in which the benchmark is a
robustness check rather than evidence of improvement outside the linear FIR
setting.

\subsection{Public Cascaded Tanks benchmark}

As a second public input-output benchmark, we use the Cascaded Tanks dataset
\cite{schoukens2020cascaded}, accessed through the
official \texttt{nonlinear-benchmarks} dataloader.  The data consist of one
training record and one validation record of length 1024.  We fit the same
length-$50$ FIR model using the first 700 training samples and evaluate
one-step predictions on the validation record after a 50-sample initialization
window.  Inputs and outputs are standardized using the training segment.

\begin{table*}[t]
\centering
\scriptsize
\resizebox{\textwidth}{!}{\begin{tabular}{l l l r r r r r r r r r r}
\hline
test & kernel & hpe & $\widehat{\omega}$ & $\widehat{\omega}_{\mathrm{c}}$ & RMSE ML & RMSE EB & RMSE mix & RMSE corr & FIT ML & FIT EB & FIT mix & FIT corr \\
\hline
cascaded\_tanks & TC & EB & 0.490 & 1.000 & 0.4719 & 0.4766 & 0.4742 & 0.4766 & 54.7 & 54.3 & 54.5 & 54.3 \\
cascaded\_tanks & TC & SURE & 1.000 & 1.000 & 0.4719 & 0.4751 & 0.4751 & 0.4751 & 54.7 & 54.4 & 54.4 & 54.4 \\
cascaded\_tanks & TC & GCV & 0.000 & 1.000 & 0.4719 & 0.4823 & 0.4719 & 0.4823 & 54.7 & 53.8 & 54.7 & 53.8 \\
cascaded\_tanks & SS & EB & 0.111 & 1.000 & 0.4719 & 0.4743 & 0.4722 & 0.4743 & 54.7 & 54.5 & 54.7 & 54.5 \\
cascaded\_tanks & SS & SURE & 0.558 & 1.000 & 0.4719 & 0.4729 & 0.4725 & 0.4729 & 54.7 & 54.6 & 54.7 & 54.6 \\
cascaded\_tanks & SS & GCV & 0.000 & 1.000 & 0.4719 & 0.4831 & 0.4719 & 0.4831 & 54.7 & 53.7 & 54.7 & 53.7 \\
\hline
\end{tabular}
}
\caption{Public Cascaded Tanks benchmark with a length-$50$ FIR model trained
on the first 700 samples of the training record.  RMSE and $\mathrm{FIT}_y$
are computed on standardized validation data after a 50-sample initialization
window.  Noise variance estimated from ML training residuals; corrected
columns as in Table~\ref{tab:silverbox}.}
\label{tab:cascaded-tanks}
\end{table*}

Table~\ref{tab:cascaded-tanks} shows the same split between protective behavior
and calibration limits.  The regularized EB estimates are slightly worse than
ML in this linear FIR evaluation, and the mixed rule either retreats to ML, as
in the GCV rows, or uses only a partial shrinkage weight.  The SURE-type rows
remain the limitation: the plug-in criterion assigns full EB weight for the TC
kernel and therefore inherits the base estimator's small degradation
($0.4751$ versus $0.4719$ for ML).  As on Silverbox, the trace-corrected
variant sets $\widehat{\omega}_{\mathrm{c}}=1$ throughout and cannot repair
this, for the same $\widehat{\sigma}^2$-inflation reason, which localizes the
failure in the misspecification-driven calibration of the plug-in components
rather than in the oracle mixing argument or in the trace bias alone.  The
worst case across all rows is bounded by the base estimator's own error,
which is the graceful-degradation property of the convex mixture.
Thus the benchmark strengthens the robustness evidence without claiming that
the XMSE plug-in rule is uniformly optimal on nonlinear real-data problems.

\section{Limitations and Future Work}

The present theory separates the fixed-weight XMSE expansion from the plug-in
selection of the mixing weight.  We prove consistency of the plug-in weight and
derive a second-order oracle regret rate for the corresponding design
criterion.  The transfer result (Theorem~\ref{thm:plugin-xmse-transfer})
carries this regret rate to the deterministic fixed-weight risk curve
evaluated at the selected weight, and its uniform-convergence ingredient is
proved from the inherited expansion rather than assumed.  The remaining gap
is explicit: the XMSE of the data-dependent estimator
$\thetamix(\widehat{\omega}_{\mathrm{XMSE}})$ itself, with the random weight
inside the expectation, differs from the evaluated risk curve by cross-moments
between the weight and the EB correction, which are generated by their common
dependence on the data (Remark~\ref{rem:transfer-scope}).  Bounding these
cross-moments---plausibly by a delta-method expansion of the smooth
scaled-kernel weight map away from the projection boundary, with extra care
near the projection and threshold boundaries where the weight map is
nonsmooth---is the main open theoretical step, and none of our experimental
claims rely on it.  We also provide a
thresholded decision result at the zero-bias boundary and a finite-set oracle
inequality for kernel selection, with an extension to growing candidate
dictionaries under uniform plug-in approximation and a high-probability
oracle bound, including a sub-Gaussian rate specialization, and a
compact multi-parameter plug-in consistency result based on uniform convergence
and an argmin argument.

Two design choices deserve a limitation note of their own.  First, the
projection of the weight onto $[0,1]$ deliberately forgoes the strictly
negative criterion values that an extrapolated weight could attain when
$C_{\mathrm{EB}}>0$ or $C_{\mathrm{EB}}<-2B_{\mathrm{EB}}$
(Remark~\ref{rem:projection}); the experiments quantify how often the
unconstrained ratio leaves $[0,1]$, but we do not study extrapolated rules.
Second, the plug-in components inherit an $O(\sigma^2/N)$ trace bias
(Remark~\ref{rem:corrected}).  The trace-corrected variant removes the
identified bias terms and repairs the over-conservative weights in the
well-specified semi-real setting, but it does not remove the variance of the
plug-in quadratic forms, and under model misspecification, where
$\widehat{\sigma}^2$ also absorbs model error, it over-corrects and returns
full EB weight on the public benchmarks.  Finite-sample and
misspecification-robust calibration of the plug-in components therefore
remains the main practical limitation.

All asymptotic statements are for fixed parameter dimension.  The growing
dictionary results allow the candidate menu to expand, but they do not by
themselves prove XMSE validity when the FIR order grows with $N$.  A
growing-order extension would need uniform control of the design eigenvalues,
kernel inverses, scale-estimation objectives, and plug-in component errors as
both the sample size and parameter dimension vary.  We view this as a separate
high-dimensional extension rather than an assumption hidden in the present
fixed-order theory.

The simulations focus on scaled kernels $P(\eta)=\eta K$ and on controlled
tail-mismatch settings that make kernel misalignment explicit, supplemented by
a semi-real fixed impulse-response benchmark and limited-training Silverbox and
Cascaded Tanks benchmarks.  This provides a clean stress test, a fixed-response
check, and two public real-data benchmarks, with SURE-tuned, hard-selection,
trace-corrected, and marginal-likelihood baselines.  One natural competitor is
not yet included: the closed-form biased estimators of \cite{ju2025biased},
which match the XMSE of an EB estimator without hyperparameter estimation and
would make an informative third anchor besides ML and EB.  A broader
experimental study should
include this comparison, additional input designs, unknown noise variance in
the synthetic settings, non-Gaussian
disturbances, more archival system-identification benchmarks, and model classes
beyond linear FIR when the data are strongly nonlinear.  Although we provide
finite-set, growing-dictionary, and compact multi-parameter consistency
guarantees, plus a 100-system numerical validation, a larger automatic
kernel-selection study across additional kernels and operating conditions is
left for future work.

Finally, the finite-sample plug-in approximation inherits the limitations of
second-order asymptotic XMSE.  For very small samples, higher-order terms can
be visible, as seen in the $N=30$ sensitivity experiment.  Bootstrap or
higher-order corrections may improve the plug-in rule in this regime.

\section{Conclusion}

XMSE is not only a diagnostic tool for explaining empirical Bayes failures.  It
can also serve as a constructive design criterion for adaptive estimators.  The
mixed estimator studied here has a scalar oracle rule with second-order
dominance over both ML and the base EB estimator at the XMSE scale.  Its
plug-in version is supported by consistency, a boundary-robust oracle
inequality, an oracle-regret rate, and a transfer of that rate to the
fixed-weight risk curve evaluated at the selected weight, and
the primitive plug-in route extends to compact multi-parameter kernel families.
The same criterion also extends naturally to finite and growing candidate
dictionaries with high-probability and sub-Gaussian oracle guarantees.  The
result is a simple safeguard: it preserves useful regularization when the
kernel is appropriate and retreats toward ML when the regularizer is harmful.

\sloppy


\begin{thebibliography}{99}

\bibitem{efron1972}
B. Efron and C. Morris.
Limiting the risk of Bayes and empirical Bayes estimators---Part II: The
empirical Bayes case.
\emph{Journal of the American Statistical Association}, 67(337):130--139,
1972.

\bibitem{maritz2018}
J. S. Maritz and T. Lwin.
\emph{Empirical Bayes Methods with Applications}.
Chapman and Hall/CRC, 2018.

\bibitem{pillonetto2022}
G. Pillonetto, T. Chen, A. Chiuso, G. De Nicolao, and L. Ljung.
\emph{Regularized System Identification: Learning Dynamic Models from Data}.
Springer Nature, 2022.

\bibitem{robbins1956}
H. Robbins.
An empirical Bayes approach to statistics.
In \emph{Proceedings of the Third Berkeley Symposium on Mathematical
Statistics and Probability}, volume 1, pages 157--163, 1956.

\bibitem{james1961}
W. James and C. Stein.
Estimation with quadratic loss.
In \emph{Proceedings of the Fourth Berkeley Symposium on Mathematical
Statistics and Probability}, volume 1, pages 361--379, 1961.

\bibitem{efron1973}
B. Efron and C. Morris.
Stein's estimation rule and its competitors: An empirical Bayes approach.
\emph{Journal of the American Statistical Association}, 68(341):117--130,
1973.

\bibitem{morris1983}
C. N. Morris.
Parametric empirical Bayes inference: Theory and applications.
\emph{Journal of the American Statistical Association}, 78(381):47--55,
1983.

\bibitem{petrone2014}
S. Petrone, S. Rizzelli, J. Rousseau, and C. Scricciolo.
Empirical Bayes methods in classical and Bayesian inference.
\emph{Metron}, 72(2):201--215, 2014.

\bibitem{stein1981}
C. M. Stein.
Estimation of the mean of a multivariate normal distribution.
\emph{The Annals of Statistics}, 9(6):1135--1151, 1981.

\bibitem{wahba1990}
G. Wahba.
\emph{Spline Models for Observational Data}.
SIAM, 1990.

\bibitem{rasmussen2006}
C. E. Rasmussen and C. K. I. Williams.
\emph{Gaussian Processes for Machine Learning}.
MIT Press, 2006.

\bibitem{ljung1999}
L. Ljung.
\emph{System Identification: Theory for the User}.
Prentice Hall, 2nd edition, 1999.

\bibitem{pillonetto2010}
G. Pillonetto and G. De Nicolao.
A new kernel-based approach for linear system identification.
\emph{Automatica}, 46(1):81--93, 2010.

\bibitem{chen2012}
T. Chen, H. Ohlsson, and L. Ljung.
On the estimation of transfer functions, regularizations and Gaussian
processes---Revisited.
\emph{Automatica}, 48(8):1525--1535, 2012.

\bibitem{pillonetto2014}
G. Pillonetto, F. Dinuzzo, T. Chen, G. De Nicolao, and L. Ljung.
Kernel methods in system identification, machine learning and function
estimation: A survey.
\emph{Automatica}, 50(3):657--682, 2014.

\bibitem{chen2014ssm}
T. Chen and L. Ljung.
Constructive state space model induced kernels for regularized system
identification.
\emph{IFAC Proceedings Volumes}, 47(3):1047--1052, 2014.

\bibitem{chen2014sparse}
T. Chen, M. S. Andersen, L. Ljung, A. Chiuso, and G. Pillonetto.
System identification via sparse multiple kernel-based regularization using
sequential convex optimization techniques.
\emph{IEEE Transactions on Automatic Control}, 59(11):2933--2945, 2014.

\bibitem{chen2015obf}
T. Chen and L. Ljung.
Regularized system identification using orthonormal basis functions.
In \emph{Proceedings of the European Control Conference}, pages 1291--1296,
2015.

\bibitem{pillonetto2016atomic}
G. Pillonetto, T. Chen, A. Chiuso, G. De Nicolao, and L. Ljung.
Regularized linear system identification using atomic, nuclear and
kernel-based norms: The role of the stability constraint.
\emph{Automatica}, 69:137--149, 2016.

\bibitem{carli2017maxent}
F. P. Carli, T. Chen, and L. Ljung.
Maximum entropy kernels for system identification.
\emph{IEEE Transactions on Automatic Control}, 62(3):1471--1477, 2017.

\bibitem{chen2018kernel}
T. Chen.
On kernel design for regularized LTI system identification.
\emph{Automatica}, 90:109--122, 2018.

\bibitem{chenmh2020}
M. Chen, Z. Xu, J. Zhao, C. Song, Y. Zhu, and Z. Shao.
Nonparametric identification based on multi-inherited Gaussian process
regression for batch process.
\emph{Industrial \& Engineering Chemistry Research}, 59(47):20757--20766, 2020.

\bibitem{chenmh2022}
M. Chen, Z. Xu, J. Zhao, Y. Zhu, and Z. Shao.
Nonparametric identification of batch process using two-dimensional
kernel-based Gaussian process regression.
\emph{Chemical Engineering Science}, 250:117372, 2022.

\bibitem{chiuso2016}
A. Chiuso.
Regularization and Bayesian learning in dynamical systems: Past, present and
future.
\emph{Annual Reviews in Control}, 41:24--38, 2016.

\bibitem{chen2013implementation}
T. Chen and L. Ljung.
Implementation of algorithms for tuning parameters in regularized least
squares problems in system identification.
\emph{Automatica}, 49(7):2213--2220, 2013.

\bibitem{pillonetto2015tuning}
G. Pillonetto and A. Chiuso.
Tuning complexity in regularized kernel-based regression and linear system
identification: The robustness of the marginal likelihood estimator.
\emph{Automatica}, 58:106--117, 2015.

\bibitem{mu2018}
B. Mu, T. Chen, and L. Ljung.
On asymptotic properties of hyperparameter estimators for kernel-based
regularization methods.
\emph{Automatica}, 94:381--395, 2018.

\bibitem{mu2018gcv}
B. Mu, T. Chen, and L. Ljung.
Asymptotic properties of generalized cross validation estimators for
regularized system identification.
\emph{IFAC-PapersOnLine}, 51(15):203--208, 2018.

\bibitem{ju2021gcv}
Y. Ju, T. Chen, B. Mu, and L. Ljung.
On asymptotic distribution of generalized cross validation hyper-parameter
estimator for regularized system identification.
In \emph{Proceedings of the 60th IEEE Conference on Decision and Control},
pages 1598--1602, 2021.

\bibitem{ju2022sure}
Y. Ju, T. Chen, B. Mu, and L. Ljung.
On convergence in distribution of Stein's unbiased risk hyper-parameter
estimator for regularized system identification.
In \emph{Proceedings of the 41st Chinese Control Conference}, pages
1491--1496, 2022.

\bibitem{mu2024cv}
B. Mu and T. Chen.
On asymptotic optimality of cross-validation estimators for kernel-based
regularized system identification.
\emph{IEEE Transactions on Automatic Control}, 69(7):4352--4367, 2024.

\bibitem{zhang2024family}
M. Zhang, T. Chen, and B. Mu.
A family of hyperparameter estimators linking EB and SURE for kernel-based
regularization methods.
\emph{IEEE Transactions on Automatic Control}, 69(12):8674--8689, 2024.

\bibitem{mu2024when}
B. Mu, L. Ljung, and T. Chen.
When cannot regularization improve the least squares estimate in the
kernel-based regularized system identification.
\emph{Automatica}, 160:111442, 2024.

\bibitem{ju2026xmse}
Y. Ju, T. Chen, B. Wahlberg, and H. Hjalmarsson.
Excess mean squared error of empirical Bayes estimators.
\emph{IEEE Transactions on Automatic Control}, 2026.
doi: 10.1109/TAC.2026.3685569.

\bibitem{ju2025biased}
Y. Ju, B. Wahlberg, and H. Hjalmarsson.
Bayes and biased estimators without hyper-parameter estimation: Comparable
performance to the empirical-Bayes-based regularized estimator.
arXiv preprint arXiv:2503.11854, 2025.

\bibitem{wigren2013silverbox}
T. Wigren and J. Schoukens.
Three free data sets for development and benchmarking in nonlinear system
identification.
In \emph{Proceedings of the European Control Conference}, pages 2933--2938,
2013.

\bibitem{schoukens2020cascaded}
M. Schoukens, P. Mattson, T. Wigren, and J.-P. No{\"e}l.
Cascaded tanks benchmark combining soft and hard nonlinearities.
4TU.ResearchData, Dataset, 2020.
doi: 10.4121/12960104.

\end{thebibliography}
\end{document}